\theoremstyle{plain}
\newtheorem{theorem}{Theorem}[section]
\newtheorem{proposition}[theorem]{Proposition}
\newtheorem{lemma}[theorem]{Lemma}
\theoremstyle{definition}
\theoremstyle{remark}
\newtheorem{remark}[theorem]{Remark}
\icmltitlerunning{Informative labels in SSL}
\newcommand\independent{\protect\mathpalette{\protect\independenT}{\perp}}
\def\independenT#1#2{\mathrel{\rlap{$#1#2$}\mkern2mu{#1#2}}}
\begin{document}

\twocolumn[
\icmltitle{Are labels informative in semi-supervised learning? \\ Estimating and leveraging the missing-data mechanism}



\icmlsetsymbol{equal}{*}

\begin{icmlauthorlist}
\icmlauthor{Aude, Sportisse}{yyy,comp}
\icmlauthor{Hugo Schmutz}{yyy,comp,sch,sch1}
\icmlauthor{Olivier Humbert}{sch,yy}
\icmlauthor{Charles Bouveyron}{yyy,comp}
\icmlauthor{Pierre-Alexandre Mattei}{yyy,comp}
\end{icmlauthorlist}

\icmlaffiliation{yyy}{3iA Côte d'Azur}
\icmlaffiliation{comp}{Centre Inria d'Université Côte d'Azur, Equipe Massai}
\icmlaffiliation{sch1}{Université Côte d'Azur, Laboratoire J.A. Dieudonné, UMR CNRS 7351}
\icmlaffiliation{sch}{Université Côte d'Azur, TIRO-MATOS, UMR CEA E4320}
\icmlaffiliation{yy}{Centre Antoine Lacassagne}
\icmlcorrespondingauthor{Aude Sportisse}{aude.sportisse@inria.fr}

\icmlkeywords{Machine Learning, ICML}

\vskip 0.3in
]




\begin{abstract}
  Semi-supervised learning is a powerful technique for leveraging unlabeled data to improve machine learning models, but it can be affected by the presence of  ``informative" labels, which occur when some classes are more likely to be labeled than others. In the missing data literature, such labels are called missing not at random. In this paper, we propose a novel approach to address this issue by estimating the missing-data mechanism and using inverse propensity weighting to debias any SSL algorithm, including those using data augmentation. We also propose a likelihood ratio test to assess whether or not labels are indeed informative. Finally, we demonstrate the performance of the proposed methods on different datasets, in particular on two medical datasets for which we design pseudo-realistic missing data scenarios.
\end{abstract}

\section{Introduction}

Technological advancements have enabled the collection and storage of vast amounts of data, offering real hope for better prediction of phenomena. Unfortunately, this also leads to dirty data, and more specifically, missing data. In this paper, we focus on the scenario where a large amount of data is available, but labeling the data is costly, time-consuming, or even risky (for instance, medical data collection requires invasive tests for patients). Semi-supervised learning (SSL) \citep{chapelle2009semi,van2020survey} has emerged as a crucial problem to leverage both labeled and unlabeled data in predictive models. The unlabeled data are treated as observations with missing labels, as previously done in various studies \citep{grandvalet2004semi,ahfock2019missing,hu2022on,schmutz2022don}. Recently, SSL algorithms have been extended to deep learning techniques demonstrating remarkable empirical successes, particularly through the systematic use of data augmentation \citep{berthelot2019remixmatch,xie2020unsupervised,sohn2020fixmatch,rizve2021defense}.

One of the challenges in semi-supervised learning is that the distribution of labels in the unlabeled dataset is unknown. For instance, it is uncertain whether a class that is well-represented in the labeled images is also well-represented in the unlabeled images. The traditional approach is to assume that the label distributions are identical in the labeled and unlabeled datasets. This assumption implies that people label classes in equal proportions, regardless of the class nature or the quality of the images. However, it disregards the potential unbalance of popularity among classes. 
For example, in a medical context, doctors may prioritize labeling the class of sick patients or leave unlabeled the data with an ambiguous diagnosis. When the label distribution differs in the labeled and unlabeled datasets, the missing labels are said to be informative or Missing Not At Random (MNAR). The missingness of a label must be taken into account
to obtain results from the available data that can be generalized to the entire population \citep{rubin1976inference}. 
This is usually modeled by the missing-data mechanism, i.e. the probability of a sample to be observed (depending on the values of the label itself). Recently, it has been shown that classical SSL algorithms indeed fail to provide accurate results for the less observed classes in presence of informative labels. As there is a selection bias in the sample, MNAR data also raise the issue that some models can lead to non-identifiable parameters \citep{baker1988regression,miao2016identifiability}. A major challenge is that testing whether the data is indeed MNAR is difficult \citep{d2010new}, but it is necessary to provide a guideline for choosing which algorithm to apply. The main objective of this paper is to address these issues by estimating the missing-data mechanism. 

Beyond the scope of deep learning, in the missing-data literature, significant works have considered the case of MNAR responses \citep{tang2018statistical}. \citet{ibrahim1996parameter} and \citet{ibrahim2001missing} estimate the parameters of both the model and the missing-data mechanism using the Expectation-Maximization (EM) algorithm in binomial regression and generalized linear models. They also propose a likelihood ratio test statistic for selecting the variables related to the missingness but leave the identifiability of the parameters in perspective. It is in the semi-parametric setting that a lot of work has been done to obtain identification results, most often using a \textit{shadow} variable \citep{miao2015identification,miao2016varieties} that adds auxiliary information \citep{molenberghs2008every}. Some works \citep{shao2016semiparametric,morikawa2017semiparametric} also propose to debias classical estimators by using inverse probability
weighting (IPW) techniques, weighting each sample by the inverse of its probability of being observed
as determined by the missing-data mechanism. However, only the recent work of \citet{hu2022on}  proposes an extension to deep learning, debiasing the risk estimator with a propensity score, but they do not directly model the missing-data mechanism, which is the main focus of our study (see Section \ref{sec:compar} for a comprehensive comparison).



\looseness=-1
Our key contributions are summarized as follows: 
\begin{itemize}[topsep=0pt,itemsep=0pt,leftmargin=*]
    \item We consider a general self-masked MNAR model and prove its identifiability, showing in the process of identifiability of the model of \citep{hu2022on}.
    \item We propose two estimates of the missing-data mechanism and show their consistency. 
    \item Based on these estimators, we propose an algorithm using IPW techniques able to debias any SSL algorithm in presence of informative labels.
    \item We provide a heuristic procedure to test whether the labels are indeed MNAR. 
    \item We first demonstrate the efficiency of our methods on
    classical datasets. {Furthermore}, we propose two pseudo-
    realistic MNAR scenarios using medMNIST datasets \citep{yang2021medmnist}. These contrast with the toy missing-data scenarios often used in existing works, even when the method is designed to handle informative labels. 
\end{itemize}


\section{Informative labels}
\label{sec:informativelabels}

\subsection{Missing labels typology}

In this paper, we study a dataset of $n$ samples, denoted as $D = {(x_i, y_i)}_{i=1}^n$, where $x_i$ represents the features and $y_i$ represents the labels, which are drawn from the distribution $p(x,y)=p(x)p(y|x)$. Some of the labels are supposed to be missing, and thus the dataset is split into two subsets: a labeled dataset $D_{\ell} = {(x_i, y_i)}_{i=1}^{n_\ell}$ of size $n_\ell$ and an unlabeled dataset $D_{u} = {(x_i)}_{i=n_\ell+1}^{n}$ of size $n_u = n - n_\ell$. The distribution of the labeled dataset is denoted as $p^\ell(.)$ (resp. $p^u(.)$ for the unlabeled dataset). In the following, we consider a discrete set of labels {denoted as $\mathcal{C}=\{1,\dots,K\}$, with $K$ the  number of classes}.

Most of the semi-supervised learning methods make the following assumption:
\begin{enumerate}[label=\textbf{A\arabic*.},topsep=0pt]
    \item \label{ass:SSL2} The marginal distributions of the features and of the labels are identical in the labeled and unlabeled dataset, i.e.\ $p^\ell(x)=p^u(x), \: \forall x$ and  $p^\ell(y)=p^u(y), \: \forall y$. 
\end{enumerate}
Assumption \ref{ass:SSL2} means that people label classes in equal proportions, regardless of their nature (label) or the quality of the images (features).
Modeling two separate distributions, $p^\ell(.)$ and $p^u(.)$, is not always convenient, so we instead use a notation commonly used in missing-data studies. We introduce an additional random variable called missing-data indicator, $r\in \{0,1\}$, where $r = 1$ if $y$ is observed and $r = 0$ if $y$ is missing. For example, this notation implies $p^\ell(x) = p(x|r = 1)$ and $p^u(x) = p(x|r = 0)$. According to Rubin's \citeyearpar{rubin1976inference} typology, labels can be:
(i) Missing Completely At Random (MCAR) if the cause of the missingness is completely independent from the data values, i.e. $r \independent x,y$ (equivalent to Assumption \ref{ass:SSL2}), (ii) Missing At Random (MAR) if the cause of the missingness can be explained by the features, $r \independent y | x$ and (iii) Missing Not At Random (MNAR) in all other cases.
 For example, labels will be MAR if medical doctors are less likely to label analyses that are of poor quality but they will be MNAR if they prefer to label the class of sick patients first. This last situation creates an ``unbalanced class popularity" for which Assumption \ref{ass:SSL2} is no longer valid.

\begin{figure}
\centering
\includegraphics[width=0.25\textwidth]{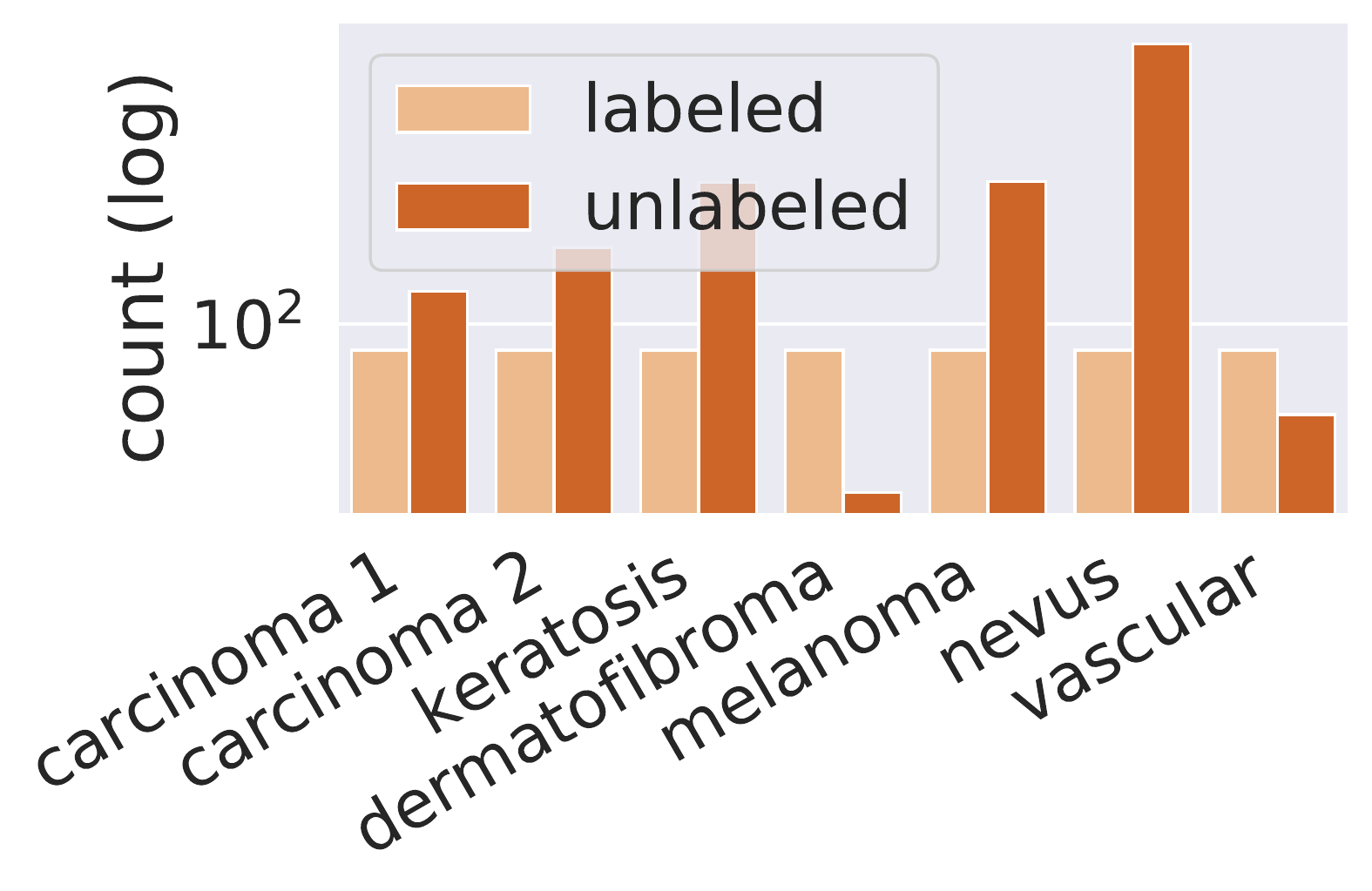}
\hskip 0.15in
    \begin{tikzpicture}
        \node[obs]                               (y1) {$x$};
    	\node[latent, right=0.75cm of y1]            (y2) {$y$};
        \node[obs,below of=y2, yshift=-1.5cm]                               (r2) {$r$};
    		
    	\draw[->] (y1) edge (y2);
    	\draw[->] (y2) edge (r2);
    \end{tikzpicture}
         \caption{ Left panel: MNAR labels for dermaMNIST (log count of labeled and unlabeled images per class). Right panel: Structural causal graph of the self-masked mechanism (Assumption \ref{ass:ourSSL2}). The nodes in grey represent fully observed variables and the edges from $x$ to $y$ means that $x$ causes $y$.} 
         \label{fig:graphical}
\end{figure}

In this paper, we focus on the MNAR case, specifically, the labels are assumed to be ``self-masked MNAR", meaning their unavailability only depends on their own values. This assumption allows us to model the unbalanced class popularity situation (see Figure \ref{fig:graphical}) and is widely used in the missing-data literature \citep{mohan2018handling,sportisse2020estimation}. Our assumption is formalized as follows:
\begin{enumerate}[label=\textbf{A\arabic*.},topsep=0pt]

    \setcounter{enumi}{1}
    \item \label{ass:ourSSL2} The labels are self-masked MNAR, i.e.\ $r \independent x | y$. 
\end{enumerate}

Assumption \ref{ass:ourSSL2} is weaker than Assumption \ref{ass:SSL2}, since
the equality of marginal distributions, either for features or labels, in both the labeled and unlabeled datasets, is relaxed. It only requires that the conditional distribution of the features given the class is the same in the labeled and unlabeled datasets (i.e. $p^\ell(x|y)=p^u(x|y), : \forall x,y$). For example, it does not cover the case where the radiography of sick patients does not have the same resolution whether it is labeled or not.

\begin{remark}[More general assumptions]
The notation introduced here does not allow to consider different sets for the labels present in the labeled dataset and in the unlabeled dataset. Label distribution mismatch has already been considered, such as when new classes appear in the unlabeled dataset \citep{guo2020safe, cao2021open} or when none of the classes present in the labeled dataset are present in the unlabeled dataset \citep{chen2020semi, huang2021universal}. However, these works are beyond the context of our work, as they do not allow to directly account for the informative nature of the labels, which is the main focus of this paper.




\end{remark}

\subsection{Non-ignorable missing-data mechanism}
\label{sec:notign}

This typology of missing-data mechanism is important in determining the appropriate method to use: statistical inference can be performed on $p(x,y)$ for MCAR or MAR labels, but it should be performed on $p(x,y,r)$ for MNAR labels \citep{little2019statistical}. We denote the parameter of interest, $\theta \in \Theta$, of $p(y|x;\theta)$. {In most cases, this parameter corresponds to the weights of a neural network, or in simpler cases, a logistic regression. The parameter $\phi$ of the missing data mechanism $p(r|x,y;\phi)$ lives in $\Phi=[0,1]^K$.} In the following, we assume that the parameters are distinct, meaning that the joint parameter space is equal to $\Theta \times \Phi$. Following the common notation introduced by \citet{le2020linear}, the observed label vector is $(y \odot r)$, where $\odot$ represents {the term-by-term product}, such that $(y \odot r)_i = y_i$ if $r_i = 1$ and $(y \odot r)_i = \mathrm{NA}$ if $r_i = 0$.

The traditional method for estimating $\theta$ is to minimize the negative observed log-likelihood:
{\small
\begin{align}
\nonumber
&\ell(\theta,\phi)=-\sum_{i=1}^{n} \log p(x_i,y_i\odot r_i,r_i;\theta,\phi)
\\
\nonumber
&=-\sum_{i=1}^n \log p(r_i|x_i,y_i\odot r_i;\phi)p(y_i\odot r_i|x_i;\theta)p(x_i) \\
&= -\sum_{i=1}^n\left\{
    \begin{array}{ll}
        \log p(r_i|x_i,y_i;\phi)p(y_i|x_i;\theta) & \mbox{if } r_i=1 \\
        \log\sum_{\tilde{y}\in \mathcal{C}} p(r_i|x_i,\tilde{y};\phi)p(\tilde{y}|x_i;\theta) & \mbox{if } r_i=0
    \end{array} \right. + C
\label{eq:loglikeli}    
    \\ \nonumber
&= -\sum_{i=1}^n r_i\log p(y_i|x_i;\theta) + C \: \: \textrm{under M(C)AR assumption},
\end{align}}
where $C$ is a constant independent of $\theta$. 
For M(C)AR labels, we use in the last step that $p(r_i|x_i,{y};\phi)$ does not depend on ${y}$ and that $\sum_{\tilde{y}\in \mathcal{C}}p(\tilde{y}|x_i;\theta)=1$; the result follows if $\phi$ is considered as a nuisance term. {This simple calculation is a common technique in the missing data literature \citep{little2019statistical}.} It implies that for MCAR or MAR labels, it is not necessary to estimate the missing-data mechanism and minimizing the complete likelihood (on labeled data only) is sufficient. However, for MNAR labels, the missing-data mechanism cannot be ignored and must be taken into account.

\subsection{Identification of the joint distribution}

To fix ideas, Figure \ref{fig:graphical} (right panel) shows the causal relationships between the variables $x, y$ and the missing-data indicator $r$ through a structural causal graph \citep{neuberg2003causality}. Assumption \ref{ass:ourSSL2} allows us to get the nonparametric identification of the joint distribution $p(y,x,r)$, i.e.\ it can be expressed with quantities involving only observed data. Specifically, in the self-masked setting, the features act as shadow variables, providing enough auxiliary observed information to achieve the identifiability of the parameters.




\begin{proposition}[Identification of the joint distribution]
Under Assumptions \ref{ass:ourSSL2} (self-masked MNAR), the joint distribution $p(y,x,r)$ is identified. 
\end{proposition}

This result is a corollary of Theorem 1 in \citep{miao2015identification} 
 and is proved in Appendix \ref{sec:proof_id}. It is worth noting that this result also demonstrates the identification in \citep{hu2022on}.

\section{Debiasing classical SSL algorithms}\label{sec:debiasSSL}

\subsection{Complete-case: learning with labeled data}\label{sec:labelCC}

In classical supervised learning, the aim is to learn a predictive model $p(y|x;\theta)$, parametrized by $\theta \in \Theta$, by minimizing the theoretical risk:
$$\theta^\star=\mathrm{argmin}_{\theta\in \Theta} \quad \mathcal{R}(\theta):=\mathbb{E}_{(x,y)\sim p(x,y)}[\ell_\ell(\theta;x,y)],$$
where $\ell_\ell$ is typically the negative log-likelihood function $\ell_\ell(\theta;x_i,y_i)=-\log p(y_i|x_i;\theta)$ but can be any loss function. The theoretical risk is never observed, as it requires the knowledge of the true distribution $p(x,y)$. A typical learning procedure is then the minimization of the empirical risk, which is an unbiased estimate of the theoretical risk:
$$\hat{\theta}=\mathrm{argmin}_{\theta\in \Theta} \quad \hat{\mathcal{R}}(\theta):= \frac{1}{n}\sum_{i=1}^n \ell_\ell(\theta;x_i,y_i).$$

This quantity is known in the supervised learning setting, but is still unobserved in the presence of missing labels. For MCAR labels, the natural estimator for $\mathcal{R}(\theta)$ is the complete-case empirical risk, computed for the labeled data only: $\hat{\mathcal{R}}^{CC}(\theta):=\frac{1}{n_\ell}\sum_{i=1}^n r_i \ell_\ell(\theta;x_i,y_i)$. It is unbiased for MCAR labels but not in the other cases. For MAR labels, \citet{liu2020kernel} propose to use the IPW estimator defined as $\tilde{\mathcal{R}}^{\mathrm{IPW,MAR}}(\theta):=\frac{1}{n}\sum_{i=1}^n \frac{r_i \ell_\ell(\theta;x_i,y_i)}{\pi^{\mathrm{MAR}}(x_i)}$, where $\pi^{\mathrm{MAR}}(x)=\mathbb{P}(r=1|x)$. Similarly, for self-masked MNAR labels, we propose the following IPW estimator: 
\begin{equation}\label{eq:IPWestimatorMNAR}
{\hat{\mathcal{R}}}_\phi(\theta):=\frac{1}{n}\sum_{i=1}^n \frac{r_i \ell_\ell(\theta;x_i,y_i)}{\phi_{y_i}},
\end{equation}
where $\phi=(\phi_0, ..., \phi_K)\in\Phi=[0, 1]^K$, and  $\forall k \in \mathcal{C}, \: \phi_k := \mathbb{P}(r=1|y=k)$. 

The idea behind the IPW technique is that one labeled sample $(x_i,y_i,r_i=1)$ should not be counted only once but should take into account that there are unlabeled samples $(x_j,y_j,r_j=0), j\neq i$ that belong to the same class ($y_j=y_i$). As a result, it is then counted $1/\phi_{y_i}$ times. For example, if the probability of being observed in a class is one third, an observed sample from that class will be counted three times.

\begin{proposition}[Unbiasedness of the IPW estimator]\label{prop:IPWunbiased}
    The IPW estimator proposed in \eqref{eq:IPWestimatorMNAR} is unbiased, if the mechanism is well specified, i.e. $\mathbb{E}[r|y]=\phi_y$.
\end{proposition}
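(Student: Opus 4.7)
The plan is a short, direct expectation calculation: by linearity it suffices to show that a single summand has expectation $\mathcal{R}(\theta) = \mathbb{E}_{(x,y)\sim p(x,y)}[\ell_\ell(\theta;x,y)]$, i.e.\ that
\[
\mathbb{E}\!\left[\frac{r\,\ell_\ell(\theta;x,y)}{\phi_y}\right] = \mathbb{E}[\ell_\ell(\theta;x,y)],
\]
where the outer expectation is over the joint distribution $p(x,y,r)$. Once this identity is established, averaging $n$ such identically-distributed summands and dividing by $n$ gives $\mathbb{E}[\hat{\mathcal{R}}_\phi(\theta)]=\mathcal{R}(\theta)$.

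To establish the single-summand identity, I would condition on $(x,y)$ and use the tower property:
\[
\mathbb{E}\!\left[\frac{r\,\ell_\ell(\theta;x,y)}{\phi_y}\right]
= \mathbb{E}\!\left[\frac{\ell_\ell(\theta;x,y)}{\phi_y}\,\mathbb{E}[r\mid x,y]\right].
\]
This is where both hypotheses enter. Assumption \ref{ass:ourSSL2} (self-masked MNAR, $r\independent x\mid y$) collapses the inner conditional expectation to $\mathbb{E}[r\mid x,y]=\mathbb{E}[r\mid y]$, and the well-specification hypothesis $\mathbb{E}[r\mid y]=\phi_y$ then yields $\mathbb{E}[r\mid x,y]=\phi_y$. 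Substituting back, the factor $\phi_y$ cancels with the denominator (note $\phi_y>0$ is implicitly required so that the weight is well-defined; under the self-masked MNAR model a class with $\phi_y=0$ would never be observed and the estimator would be undefined for it), leaving $\mathbb{E}[\ell_\ell(\theta;x,y)]$ as desired.

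There is no real obstacle here; the only care needed is to make the two uses of the assumptions explicit and in the right order, since it is the combination of the self-masked structure with well-specification that turns the unknown propensity $\mathbb{P}(r=1\mid x,y)$ into the known quantity $\phi_y$. I would also briefly note that the result is stronger than unbiasedness of a single Monte Carlo draw: since the $n$ triples $(x_i,y_i,r_i)$ are i.i.d.\ from $p(x,y,r)$, the same calculation applies term by term, so $\mathbb{E}[\hat{\mathcal{R}}_\phi(\theta)] = \mathcal{R}(\theta)$ for every $\theta\in\Theta$ and every fixed, well-specified $\phi$.
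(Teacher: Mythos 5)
Your proof is correct and takes essentially the same route as the paper's: both reduce the claim to a single-summand expectation and combine the tower property with the self-masked assumption $r\independent x\mid y$ and the well-specification hypothesis $\mathbb{E}[r\mid y]=\phi_y$. The only cosmetic difference is that you condition on $(x,y)$ and collapse $\mathbb{E}[r\mid x,y]$ to $\phi_y$, whereas the paper conditions on $y$ alone and factors the conditional expectation of the product; the two are interchangeable.
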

\begin{proof}
\begin{align*}
    \mathbb{E}\left[ \frac{r}{\phi_y}\ell_\ell(\theta;x,y)\right]
    &=\mathbb{E}\left[\mathbb{E}\left[ \frac{r}{\phi_y}\ell_\ell(\theta;x,y)|y\right]\right] & \\
    &=\mathbb{E}\left[\mathbb{E}\left[ \frac{r}{\phi_y}|y\right]\mathbb{E}\left[ \ell_\ell(\theta;x,y)|y\right]\right]  &
    \\&=\mathbb{E}\left[\frac{\mathbb{E}[r|y]}{\phi_y}\mathbb{E}\left[ \ell_\ell(\theta;x,y)|y\right]\right]
    =\mathcal{R}(\theta),
\end{align*}
using $r\independent x | y$ in the second equality. 
\end{proof}

\subsection{Incorporating the unlabeled data}\label{sec:compar}

A major drawback of the classical IPW estimator in \eqref{eq:IPWestimatorMNAR} is that it only uses labeled data and not all available data. To address this, traditional SSL algorithms for MCAR labels add a regularization term to the classical supervised objective:
{\small
\begin{equation}\label{eq:estimatorclassical}
\hat{\mathcal{R}}^\textrm{SSL}(\theta):=\frac{1}{n} \sum_{i=1}^{n} 
r_i \frac{\ell_\ell(\theta;x_i,y_i)}{n_\ell/n} + \frac{\lambda}{n} \sum_{i=1}^{n}(1-r_i)\frac{\ell_u(\theta;x_i)}{n_u/n},
\end{equation}
}

where $\lambda>0$ is a regularization term. The function $\ell_u$ {is a loss function which does not depend on the labels}; \citet{schmutz2022don} note that $\ell_u$ can be viewed in many cases as a surrogate of $\ell_\ell$. 

For example, \citet{grandvalet2004semi} use the Shannon entropy. 
Another popular approach is to use ``pseudo-labels" \citep{rizve2021defense} for unlabeled data by selecting the class with the highest posterior probability.
Only the pseudo-labels that have a predicted probability higher than a predefined threshold $\tau$ are used as targets.
Both methods encourage the model to have a high level of confidence when imputing unlabeled data, but pseudo-label methods only use data points that have already been predicted with high confidence. Recently, state-of-the-art methods such as \citep{sohn2020fixmatch,berthelot2019remixmatch} have also been developed to make the model more robust to data augmentation of the features.

For MNAR labels, the standard estimator in \eqref{eq:estimatorclassical} is biased, and we propose the following estimator, which is unbiased if the mechanism is correctly specified, using the same argument as Proposition \ref{prop:IPWunbiased}:
\begin{multline}
\label{eq:drclassical}
    \hat{\mathcal{R}}^{\mathrm{SSL}}_\phi(\theta):=\frac{1}{n}\sum_{i=1}^n \frac{r_i \ell_\ell(\theta;x_i,y_i)}{{\phi}_{y_i}} \\
    -\frac{\lambda}{n}\sum_{i=1}^n \frac{r_i-{\phi}_{y_i}}{{\phi}_{y_i}}\ell_u(\theta;x_i).
\end{multline}
This estimator has the significant advantage of being able to debias any SSL algorithm, including methods using data augmentation, with the knowledge of the weights $\phi_{y_i}$.  This is the MNAR counterpart of the estimator proposed by \citet{schmutz2022don} for MCAR data {and of} the one suggested by \citet{liu2020kernel} for MAR data with $\lambda=1$. The only difference is the form of the mechanism: for MCAR data, ${\phi}_{y_i}=n_\ell/n, \forall i$ and for MAR data, they use $\pi^{\mathrm{MAR}}(x)$ defined in Section \ref{sec:labelCC} instead.


\paragraph{Comparison with the work of \citet{hu2022on}}
{Our estimator also shares similarities with the ``doubly-robust" estimate suggested by \citet{hu2022on},} for debiasing the classical SSL estimators of the risk for self-masked MNAR labels (Assumption \ref{ass:ourSSL2}). They build their estimator upon a very interesting strategy also used in the missing-data literature (see the recent review of \citet{rabe2022ignoring}) and leverage from it to indirectly account for the informative nature of the missing labels. The form of the risk estimator is as \eqref{eq:drclassical}, but they target a composite likelihood (which does not encompass the cross entropy) by starting from $\hat{\theta} \in \mathrm{argmin}_\theta -\log p(x|y;\theta)$ instead of $\hat{\theta} \in \mathrm{argmin}_\theta -\log p(y|x;\theta)$. 
The biggest advantage of their strategy is that it allows not to directly model the missing-data mechanism, which can be tedious in some missing data scenarios. Besides, the fundamental difference between our work and theirs is their weight does not involve the missing-
data mechanism, but only the class proportions $p(y)$. Their method thus encourages the model to be accurate for the least frequent classes (when $p(y)$ is small) but will not detect or favour the least labeled classes (when $\phi_y$ is small). On the contrary, our method will benefit from the estimation of the missing-data mechanism, which can be obtained at no extra computational cost (see Section \ref{sec:alg}). 

\section{Estimating the missing-data mechanism}
\label{sec:estim_meca}

In Section \ref{sec:debiasSSL}, we proposed two unbiased estimates of the risk when the labels are informative. However, both require the knowledge of the missing-data mechanism. In practice, we will use the estimators given in \eqref{eq:IPWestimatorMNAR} and \eqref{eq:drclassical} by plugging-in an estimation $\hat{\phi}$ of the mechanism, resulting in $\hat{\mathcal{R}}_{\hat{\phi}}$ and $\hat{{\mathcal{R}}}^{\mathrm{SSL}}_{\hat{\phi}}(\theta)$.
In this section, we provide two estimators of the missing-data mechanism by using either the method of moments or the method of maximum likelihood.
 

\subsection{Moment estimator}

A possible estimator of the missing-data mechanism is obtained by the method of moments applied to \mbox{$p(r=1,y=y)=\mathbb{E}\left[\mathbb{1}_{\{r=1,y\}}\right]=\phi_y p(y)$}. It implies
\begin{equation}\label{eq:meca_mm_oracle}
{\phi}^{M}_y=\frac{\sum_{i=1}^n \mathbb{1}_{\{r=1,y_i=y\}}}{n}\frac{1}{p(y)}, \forall y \in \mathcal{C}.
\end{equation}
This estimator allows us to leverage the information we have on the labeled data, because $p(r=1,y)$ is estimated by counting the number of labeled data in each class ($\mathbb{1}_{{r=1,y_i=y}}$). The challenge now is to estimate the class distribution $p(y)$. This allows for two simple cases where the mechanism can be calculated directly: (i) when we know that the entire dataset is balanced (use $p(y)=1/K$) and (ii) when we have prior information on the class proportions (use $p(y)=p_{\textrm{prior}}(y)$). This last case can happen when we have data from the general population (e.g. we know the prevalence rate of a disease).

When the class proportions are unknown, we propose to estimate $p(y)$ as follows:
\begin{equation}\label{eq:py}
\hat{p}(y;\theta)=\frac{1}{n}\sum_{i=1}^{n}p(y_i|x_i;\theta), \forall \theta \in \Theta,
\end{equation}
which is a consistent estimator of $p(y)$ noting that $p(y)=\int p(y|x;\theta)p(x)dx$.
The estimator of the missing-data mechanism is thus: 
\begin{equation}\label{eq:estim_moment}
(\hat{\phi}_y^{M})_{\theta}=\frac{\sum_{i=1}^n \mathbb{1}_{\{r=1,y_i=y\}}}{n}\frac{1}{\hat{p}(y,\theta)}, \forall y \in \mathcal{C}.
\end{equation}

\begin{remark}[Computation of $p_\theta(y)$]
The estimator of the class proportions defined in \eqref{eq:py} cannot be incorporated as such in a SGD algorithm, typically used to estimate $\theta$. We propose two ways to compute it within a mini-batch:
\begin{itemize}[topsep=0pt,itemsep=1pt,leftmargin=*]
\item use a moving averaging strategy inspired by \citep{hu2022on}, by using a buffer $\hat{p}_{\textrm{buffer}}(y)$ updated at each iteration with $\hat{p}(y,\theta=\theta_b)$ \eqref{eq:py}, where $\theta_b$ is the parameter of the current mini-batch $b$\footnote{In this method, while $(\hat{\phi}_y^{M})_{\theta}$ depends on $\theta$, we do not propagate the gradients through $\theta$.}: 
\begin{equation}\label{eq:buffer}
\hat{p}_{\textrm{buffer}}(y)=\mu \hat{p}_{\textrm{buffer}}(y) + (1-\mu) \hat{p}(y,\theta_b).
\end{equation}
\item propagate the gradients through $(\hat{\phi}_y^{M})_{\theta}$.
\end{itemize}
\end{remark}

\subsection{Maximum likelihood estimator}\label{sec:maximum_ll}

The second estimator of the missing-data mechanism relies on the method of maximum likelihood, {already carried out by \citep{ibrahim1996parameter,ibrahim2001missing} outside the scope of deep learning.}
It is obtained by minimizing the negative observed log-likelihood \eqref{eq:loglikeli}: 
\begin{equation}\label{eq:estim_ll}
\hat{\theta}^L,\hat{\phi}^L=\mathrm{argmin}_{\theta \in \Theta,\phi \in \Phi} \quad \ell(\theta,\phi;x,y\odot r,r).
\end{equation}

We highlight the following points: 
\begin{itemize}[topsep=0pt,itemsep=1pt,leftmargin=*]
\item (Two-steps algorithm for SSL). Even if \eqref{eq:estim_ll} gives an estimator $\hat{\theta}^L$ of $\theta$, the latter can be really improved by incorporating the unlabeled data as in $\hat{{\mathcal{R}}}^{\mathrm{SSL}}_{\hat{\phi}}(\theta)$ in a second step (see Algorithm \ref{alg:first} in Section \ref{sec:alg}).
\item (MCAR setting). For not informative labels,  the unlabeled data are not used for the estimation, as noted in Section \ref{sec:notign}. Besides, as expected, the minimum of the function is attained for a mechanism equal to the proportion of the labeled data. Indeed, we have $\frac{\partial \ell(\theta,\phi_0)}{\partial \phi_0}=-\frac{n_l}{\phi_0}+ \frac{n-n_l}{1-\phi_0}$ and $\frac{\partial \ell(\theta,\phi_0)}{\partial \phi_0}=0 \Leftrightarrow \phi_0=\frac{n_\ell}{n}$.
\item (Convexity). The negative observed log-likelihood \eqref{eq:loglikeli} is convex in $\phi$, for a fixed $\theta \in \Theta$ (Appendix \ref{sec:app_estimationmecha}).
\end{itemize}
 \begin{remark}[Solving \eqref{eq:estim_ll} in practice]
To our knowledge, there is no closed form for the minimization problem. In practice, we propose to calculate the gradients by using the automatic differentiation package in PyTorch \citep{paszke2017automatic}. 
To comply with the constraint $\phi \in \Phi$, we consider $\sigma(\phi_k)=\frac{1}{1+\exp(-\phi_k)}, \forall k \in \mathcal{C}$ instead of $\phi_k$. In addition, we suggest solving \eqref{eq:estim_ll} subject to the constraint of $\sum_y\frac{\sum_{i=1}^n \mathbb{1}_{\{r=1,y_i=y\}}}{n}\frac{1}{\hat{\phi}_y^L}=1$, to comply with $\sum_y p(y)=1$ (see \eqref{eq:meca_mm_oracle}), by using the \href{https://github.com/crowsonkb/mdmm}{mdmm} package. 
\end{remark}

\subsection{Theoretical results}

In this section, we provide theoretical results that validate the relevance of the chosen estimators. The proofs are detailed in Appendix \ref{sec:proofs_Section4}. We first demonstrate the consistency of the moment estimator for a fixed $\theta \in \Theta$ by applying general results such as the law of large numbers and Slutsky's theorem.

\begin{proposition}[Consistency of $\hat{\phi}^M$]\label{prop:estim_m}
    The moment estimator defined by \eqref{eq:estim_moment} is consistent for a fixed $\theta \in \Theta$. 
\end{proposition}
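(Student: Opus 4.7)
The plan is to reduce the claim to two applications of the law of large numbers, one for the numerator and one for the denominator of $(\hat\phi_y^M)_\theta$, and then glue them together with Slutsky's theorem (equivalently, the continuous mapping theorem applied to the ratio map $(a,b)\mapsto a/b$).

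First I would treat the numerator. The quantities $\mathbb{1}_{\{r_i=1,\,y_i=y\}}$ are i.i.d.\ bounded Bernoulli variables, so the strong law of large numbers gives
\begin{equation*}
\frac{1}{n}\sum_{i=1}^{n} \mathbb{1}_{\{r_i=1,\,y_i=y\}} \xrightarrow[n\to\infty]{\text{a.s.}} \mathbb{P}(r=1,y) = \phi_y\, p(y),
\end{equation*}
where the last equality just rewrites $\mathbb{P}(r=1,y)$ via Bayes' rule using the definition $\phi_y = \mathbb{P}(r=1\mid y)$.

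Next I would handle the denominator. For the fixed $\theta\in\Theta$ of the statement, the variables $p(y\mid x_i;\theta)$ are i.i.d., valued in $[0,1]$, hence integrable, so a second application of the LLN yields
\begin{equation*}
\hat{p}(y;\theta)=\frac{1}{n}\sum_{i=1}^{n} p(y\mid x_i;\theta) \xrightarrow[n\to\infty]{\text{a.s.}} \mathbb{E}_{x\sim p(x)}[p(y\mid x;\theta)] = \int p(y\mid x;\theta)\,p(x)\,dx,
\end{equation*}
which, under the standing assumption that $p(y\mid x;\theta)$ matches the true conditional (so that the right-hand side equals $p(y)$, as already invoked when $\hat{p}(y;\theta)$ was introduced as a consistent estimator of $p(y)$ in the text), converges almost surely to $p(y)$.

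Finally, since the limiting denominator $p(y)$ is strictly positive for any class that is not almost-surely empty, Slutsky's theorem applied to the ratio gives
\begin{equation*}
(\hat\phi_y^{M})_\theta = \frac{\frac{1}{n}\sum_{i=1}^{n}\mathbb{1}_{\{r_i=1,\,y_i=y\}}}{\hat{p}(y;\theta)} \xrightarrow[n\to\infty]{\text{a.s.}} \frac{\phi_y\, p(y)}{p(y)} = \phi_y,
\end{equation*}
which is the desired consistency. The proof is essentially routine; the only delicate point, and the one I would flag explicitly, is the denominator step: one must use that $p_\theta(y)$ collapses to $p(y)$ (either by well-specification of the conditional $p(y\mid x;\theta)$ at the chosen $\theta$, or by an auxiliary assumption), and that $p(y)>0$ so dividing by the limit is legitimate. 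Classes with $p(y)=0$ would need to be excluded from the statement (they carry no labeled observations anyway).
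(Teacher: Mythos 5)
Your proof is correct and follows essentially the same route as the paper's: the law of large numbers applied separately to the numerator and to $\hat{p}(y;\theta)$, followed by Slutsky's theorem (the paper routes the denominator through the continuous mapping theorem with $x\mapsto 1/x$, which is equivalent), with the same caveat that $p(y)>0$. Your explicit flag that the denominator's limit equals $p(y)$ only under well-specification of $p(y\mid x;\theta)$ at the fixed $\theta$ is a point the paper's proof passes over silently, and is worth keeping.
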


Additionally, the consistency and asymptotic normality of the maximum likelihood estimator are obtained by applying Theorem 5.7 and Theorem 5.23 of \citep{van2000asymptotic} (stated in the more general case of M-estimators). We consider the negative observed log-likelihood for a fixed $\theta$, denoted as $\ell_\theta:\phi \mapsto \ell(\theta,\phi;x,y\odot r,r)$. In Appendix, we prove that the associated statistical model $\mathcal{P}_\phi=\{p(r|y\odot r;\phi):\phi \in \Phi \}$ is identifiable and that under interchangeability of differentiation w.r.t. $\phi$ and integration over $(x,y,r)$ (Assumption \ref{ass:ll1}), the Fisher information evaluated at the oracle estimate is invertible. Besides, we assume that $\phi$ is in the interior
of the set $\Phi=[0,1]^K$, i.e. it cannot be on its boundary (Assumption \ref{ass:ll2}). 

\begin{proposition}[Consistency, asymptotic normality of $\hat{\phi}^L$]\label{prop:estim_ll}
Under Assumptions \ref{ass:ll1} and \ref{ass:ll2}, the estimator $\hat{\phi}^L$ is consistent and asymptotically normal. 
\end{proposition}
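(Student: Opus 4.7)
The plan is to present $\hat{\phi}^{L}$ as an M-estimator and invoke the standard toolkit (Theorems 5.7 and 5.23 in \citep{van2000asymptotic}) once the requisite regularity ingredients are in place. Fix $\theta\in\Theta$ and define the per-sample criterion
\[
m_\phi(x,y\odot r,r) := -\log p(r\mid x, y\odot r;\phi)\,p(y\odot r\mid x;\theta)\,p(x),
\]
so that $\ell_\theta(\phi)=\sum_{i=1}^n m_\phi(x_i, y_i\odot r_i, r_i)$ and $\hat{\phi}^{L}=\arg\min_{\phi\in\Phi}\tfrac{1}{n}\ell_\theta(\phi)$. Let $\phi_0$ denote the true parameter and set $M(\phi):=\mathbb{E}[m_\phi]$.

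\textbf{Consistency (Theorem 5.7).} I would check the three classical hypotheses: (i) $M(\phi)$ is uniquely minimized at $\phi_0$; (ii) $\Phi=[0,1]^K$ is compact; (iii) the empirical criterion converges uniformly to $M$ over $\Phi$. Point (i) is exactly identifiability of $\mathcal{P}_\phi$, shown in the Appendix (it is also a direct consequence of the identification result of the joint distribution $p(x,y,r)$ together with Assumption \ref{ass:ourSSL2}). Point (ii) is immediate from the definition of $\Phi$. For point (iii), I would use the uniform law of large numbers for continuous parametric families on a compact set (van der Vaart's Example 19.8 or Lemma 2.4 in Newey--McFadden): $\phi\mapsto m_\phi$ is continuous in $\phi$ at every data point since the likelihood is a finite mixture of Bernoulli factors indexed by the $K$ coordinates of $\phi$, and the family admits an integrable envelope because $m_\phi$ is a finite sum of logarithms of quantities bounded away from $0$ on any compact subset of the interior—extensions to the boundary follow by noting $r=1$ observations force the corresponding $\phi_k$ away from $0$ almost surely.

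\textbf{Asymptotic normality (Theorem 5.23).} Here I would verify: (a) $\phi_0\in\mathrm{int}(\Phi)$, supplied by Assumption \ref{ass:ll2}, so that first-order conditions are meaningful; (b) $\phi\mapsto m_\phi$ is twice continuously differentiable in a neighbourhood of $\phi_0$, which holds because each per-sample contribution is a smooth function of the $\phi_k$'s (again a log of a finite-dimensional mixture); (c) a local Lipschitz/envelope condition on $\nabla m_\phi$ so that the class $\{m_\phi:\phi\in\Phi\}$ is Donsker, which follows because $\Phi$ is finite-dimensional and compact and the gradients are continuous; (d) the interchangeability Assumption \ref{ass:ll1}, which guarantees that the score has mean zero and that the Fisher information equals $-\mathbb{E}[\nabla^2 m_{\phi_0}]$; (e) invertibility of the Fisher information at $\phi_0$, which is established in the Appendix. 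Given these, Theorem 5.23 yields
\[
\sqrt{n}(\hat{\phi}^{L}-\phi_0)\ \xrightarrow{d}\ \mathcal{N}\!\left(0,\ I(\phi_0)^{-1}\right),
\]
which delivers both claims of the proposition.

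\textbf{Expected difficulty.} The routine ingredients (compactness, continuity, the form of the likelihood) drop out easily; the genuinely delicate points are (1) controlling $m_\phi$ near the boundary of $\Phi$ in order to legitimately invoke the uniform law of large numbers, and (2) non-singularity of the Fisher information, which is where the self-masked MNAR structure (Assumption \ref{ass:ourSSL2}) and the identification result really play a role. Since the Appendix is dedicated to establishing identifiability of $\mathcal{P}_\phi$ and the invertibility of $I(\phi_0)$, the body of the proof reduces to assembling the pieces above and citing Theorems 5.7 and 5.23 of \citep{van2000asymptotic}; the boundary issue would be handled either by arguing that $\hat{\phi}^L$ is eventually contained in a compact subset of $\mathrm{int}(\Phi)$ with probability one (by consistency plus Assumption \ref{ass:ll2}) or by restricting the M-estimation step to a shrinking neighbourhood of $\phi_0$ once consistency has been proved.
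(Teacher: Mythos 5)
Your proposal is correct and follows essentially the same route as the paper: both cast $\hat{\phi}^L$ as an M-estimator, delegate identifiability of $\mathcal{P}_\phi$ and nonsingularity of the Fisher information to auxiliary lemmas, and then verify the hypotheses of van der Vaart's Theorems 5.7 and 5.23 (the paper invokes Theorem 5.39, the MLE corollary of 5.23, for the normality part). The boundary issue you flag is handled in the paper exactly as you anticipate, via Assumption \ref{ass:ll2}, which confines $\phi$ to a compact set $\otimes_k U_k \subset \,]0,1[^K$ so that the log-likelihood ratio and the score are uniformly bounded and the uniform law of large numbers applies.
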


Finally, the consistency of the estimator of the missing-data mechanism directly implies the consistency of the risks that we minimize in our SSL algorithms. 

\begin{proposition}[Consistency of the risk]\label{prop:risk}
If $\hat{\phi}$ is a consistent estimator of $\phi$ and if the mechanism is well specified, the risks $\hat{\mathcal{R}}_{\hat{\phi}}$ and $\hat{{\mathcal{R}}}^{\mathrm{SSL}}_{\hat{\phi}}(\theta)$ are consistent estimators of the theoretical risk $\mathcal{R}(\theta)=\mathbb{E}[\ell_\ell(\theta;x,y)]$.
\end{proposition}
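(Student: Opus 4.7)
The natural route is to split each estimator into an oracle IPW risk evaluated at the true mechanism $\phi$ and a plug-in perturbation, then control the two pieces separately:
\[
\hat{\mathcal{R}}_{\hat{\phi}}(\theta) - \mathcal{R}(\theta) = \bigl(\hat{\mathcal{R}}_{\hat{\phi}}(\theta) - \hat{\mathcal{R}}_{\phi}(\theta)\bigr) + \bigl(\hat{\mathcal{R}}_{\phi}(\theta) - \mathcal{R}(\theta)\bigr),
\]
and analogously for the SSL version. The first bracket is the plug-in error, the second is the usual IPW sampling error at the oracle weights.

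For the sampling error, Proposition~\ref{prop:IPWunbiased} gives that each summand in $\hat{\mathcal{R}}_\phi(\theta)$ has mean $\mathcal{R}(\theta)$, so the weak law of large numbers yields $\hat{\mathcal{R}}_\phi(\theta) \xrightarrow{P} \mathcal{R}(\theta)$ under a finite first-moment assumption on $\ell_\ell$. For the SSL risk, I would first check that the regularization term has mean zero: conditioning on $(x,y)$ and using the self-masked condition $r\independent x\mid y$ (Assumption~\ref{ass:ourSSL2}) together with $\mathbb{E}[r\mid y]=\phi_y$ gives $\mathbb{E}[(r-\phi_y)\ell_u(\theta;x)/\phi_y]=0$, hence $\mathbb{E}[\hat{\mathcal{R}}^{\mathrm{SSL}}_\phi(\theta)]=\mathcal{R}(\theta)$, and a second application of the LLN closes the step.

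For the plug-in error, the key observation is that $\mathcal{C}$ is finite, so $\hat{\phi}\xrightarrow{P}\phi$ coordinate-wise and the continuous mapping theorem (valid because $\phi_y>0$) yields $\max_{y\in\mathcal{C}} |1/\hat{\phi}_y - 1/\phi_y|\xrightarrow{P} 0$. Factoring this scalar out gives
\[
\bigl|\hat{\mathcal{R}}_{\hat{\phi}}(\theta) - \hat{\mathcal{R}}_{\phi}(\theta)\bigr|\le \max_{y}\bigl|1/\hat{\phi}_y - 1/\phi_y\bigr|\cdot\tfrac{1}{n}\sum_i r_i|\ell_\ell(\theta;x_i,y_i)|,
\]
and the second factor is bounded in probability by the LLN, so Slutsky's theorem delivers convergence to zero. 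The $\ell_u$ piece of the SSL risk is handled identically after noting the algebraic simplification $(r_i-\hat{\phi}_{y_i})/\hat{\phi}_{y_i} - (r_i-\phi_{y_i})/\phi_{y_i} = r_i\,(1/\hat{\phi}_{y_i} - 1/\phi_{y_i})$.

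The main obstacle is really bookkeeping: I would need to rely on (or state explicitly) a positivity condition $\phi_y>0$ for all $y\in\mathcal{C}$---without which plug-in inversion is ill-defined and the map $\phi \mapsto 1/\phi_y$ is not continuous at $\phi$---together with a mild integrability condition on the losses so that the empirical averages have finite limits. Both are standard in the IPW literature, and no further ingredient beyond Slutsky's theorem and the weak LLN should be needed. Uniform consistency over $\Theta$ is not claimed, so a pointwise-in-$\theta$ argument suffices.
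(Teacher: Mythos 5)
Your proposal is correct and follows essentially the same route as the paper: the paper also writes $\hat{\mathcal{R}}_{\hat{\phi}}(\theta)=\hat{\mathcal{R}}_{\phi}(\theta)+o_{\mathbb{P}}(1)$ and then concludes by the law of large numbers together with the unbiasedness from Proposition~\ref{prop:IPWunbiased}. The only difference is one of detail: the paper asserts the $o_{\mathbb{P}}(1)$ step and dismisses the SSL risk as ``similar,'' whereas you justify it explicitly (finiteness of $\mathcal{C}$, continuity of $\phi_y\mapsto 1/\phi_y$ under positivity, and the $o_{\mathbb{P}}(1)\cdot O_{\mathbb{P}}(1)$ bound), which is a welcome tightening rather than a departure.
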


\begin{remark}[Consistency of $\mathcal{\hat{R}}_{\hat{\phi}}$ {using unlabeled data}] As a consequence of the ignorability of the MCAR mechanism (see Section \ref{sec:notign}), the estimator of the theoretical risk using only labeled data is consistent in presence of MCAR labels. Proposition \ref{prop:risk} shows that the IPW estimator $\mathcal{\hat{R}}_{\hat{\phi}}$ is consistent for MNAR labels. It is worth noting that its expression refers only to labeled data but involves an estimator of the missing-data mechanism, computed on both labeled and unlabeled data (see Equations \eqref{eq:estim_moment} and \eqref{eq:estim_ll}). This underlines the relevance of unlabeled data in SSL with MNAR labels.
\end{remark}

\begin{remark}[Double-robustness of the SSL risk $\hat{{\mathcal{R}}}_{\hat{\phi}}^{\mathrm{SSL}}$]
An interesting property is double-robustness, meaning that the estimator is consistent even if either the missing-data mechanism estimation or imputation is inaccurate. \citet{hu2022on} prove that double-robustness of their debiased risk (see Section \ref{sec:compar}) holds, if they assume that under inaccurate propensity estimation, the imputation is perfect (in the sense that the model always predicts the right class). This is a strong assumption. In our work, double-robustness (in that sense) of $\mathcal{\hat{R}}_{\hat{\phi}}^{\textrm{SSL}}$ is directly implied by Proposition \ref{prop:risk} and by the unbiasedness of the risk. The strong assumption above could also be relaxed, applying Theorem 2 of \citep{miao2015identification} to our case: this is left as a perpective of our work.
\end{remark}

\subsection{Testing the assumption on the mechanism}\label{sec:test}
We present here a heuristics for estimating the missing-data mechanism to test if the labels are MCAR or not in the case of semi-supervised learning. The aim of such a test is to encourage the use of a specific method if the labels are not MCAR, or to support the selection of a traditional method if they are. 

We want to test
$$H_0: \: {\phi}\in \Phi^{\mathrm{MCAR}} \: \mathrm{against} \: H_1: {\phi}\notin \Phi^{\mathrm{MCAR}},$$
{where $\Phi^{\mathrm{MCAR}}=\{\phi\in [0,1]^K, \forall k,k', \phi_k=\phi_{k'}\}$.}
For the maximum likelihood estimator given in \eqref{eq:estim_ll}, we consider the following test statistic:
\begin{equation}\label{eq:stattest_ll}
-2\left(\inf_{\theta,\phi}\ell(\theta,{\phi})-\inf_{\theta} \ell(\theta,\phi^{\mathrm{MCAR}})\right).
\end{equation}
Under the same assumptions of Proposition \ref{prop:estim_ll}, we know that the test stastic $2(\ell_\theta(\phi)-\ell_\theta(\phi^{\mathrm{MCAR}})$, for a fixed $\theta \in \Theta$, converges
in distribution to a chi-squared random
variable $\chi^2_d$ (Theorem 16.7 of \citet{van2000asymptotic}), where $d$ is the difference in degrees of freedom between the null hypothesis ($H_0$) and the alternative hypothesis ($H_1$), i.e.\ $d=K-1$ for $K$ classes. We conjecture that an extension of Proposition \ref{prop:estim_ll} developed for a fixed $\theta \in \Theta$ can be obtained by considering the profile likelihood $\phi \mapsto \inf_{\theta \in \Theta} \ell(\theta,\phi)$, and by applying the asymptotic results on it \citep{murphy2000profile}. This implies that the test statistic \eqref{eq:stattest_ll} also converges in distribution to $\chi^2_d$. {Based on this asymptotic distribution, it is possible to calculate a p-value from \eqref{eq:stattest_ll} 
 and to easily test if the MCAR assumption is rejected.}

\subsection{Algorithms}\label{sec:alg}

To ensure clarity, we explain how the proposed estimators can be applied to any SSL algorithm. As previously mentioned, the goal is to plug-in the estimation of the mechanism into the estimators of the theoretical risk.

The moment estimator presented in \eqref{eq:estim_moment} is continuously updated throughout the SSL algorithm (Algorithm \ref{alg:second}), and thus the estimation of the mechanism does not add any additional computational cost when using the moment estimator, as the estimation of the mechanism and the model are performed in a single step. On the other hand, when using the maximum likelihood method, the estimation process is divided into two steps:
(i) the estimation of the mechanism by optimizing \eqref{eq:loglikeli} (Algorithm \ref{alg:first}) and (ii) the estimation of the model $\theta$ using the SSL algorithm (Algorithm \ref{alg:second} using the estimator (i) as input for $\hat{\phi}$). In both algorithms, the hyperparameters are classical:  the sizes of the mini-batch ($N_\mathcal{B}$,$N'_\mathcal{B}$), the learning rates ($\gamma_\phi$,$\gamma_\theta$,$\gamma'_\theta$) and the number of epochs ($N$, $N'$).

\begin{remark}[Adaptive threshold]
SSL algorithms \citep{rizve2021defense,sohn2020fixmatch} that utilize pseudo-label techniques 
often employ a fixed threshold to select relevant imputations from unlabeled data. However, several recent studies \citep{hu2022on,wei2021crest,berthelot2019remixmatch} have noted that an adaptive threshold can improve the performance of the classifier on the rarest classes, particularly in unbalanced semi-supervised learning. For instance, \citet{hu2022on} propose to use an adaptive threshold based on class proportions, setting higher requirements for popular classes and lower requirements for rare classes. Another possible adaptive threshold, suggested by our estimation of the missing-data mechanism, would depend on the missingness proportion of a class and set the highest requirement for the most observed class:
{\small
$$\forall k \in \mathcal{C}, \: \tau(y_k) = \tau_0 \left(\frac{\mathbb{P}(r=1|y=k)}{\max_y \mathbb{P}(r=1|y)}\right)^\beta = \tau_0 \left(\frac{\phi_y}{\max_y \phi_y}\right)^\beta,$$
}with $\tau_0$ the classical threshold and $\beta$ the hyper-parameter that determines how adaptive the cutoff is.
\end{remark}

\begin{algorithm}[tb]
	\caption{Maximum likelihood estimator for $\phi$}
	\label{alg:first}
	\begin{algorithmic}
	\small
		\STATE {\bfseries Input:} labeled data $D_\ell$, unlabeled data $D_u$
		\STATE Initialize $\theta_0$ (at random), $\phi_0$ (MCAR case: $n_\ell/n$). 
		\FOR{$k=0$ {\bfseries to} \textcolor{gray}{$N$} \textbf{iteratively}}     
		\STATE Sample a Mini-Batch $\mathcal{B}$ of size \textcolor{gray}{$N_{\mathcal{B}}$} from $D_\ell$ and from $D_u$.
		 \STATE $\phi_{k+1}=\phi_{k}-\textcolor{gray}{\gamma_\phi} 	\partial_\phi \frac{1}{N_{\mathcal{B}}}\sum_{i\in \mathcal{B}} \ell(\theta_k,\phi_k)$ \STATE Sample a Mini-Batch $\mathcal{B}$ of size \textcolor{gray}{$N_{\mathcal{B}}$} from $D_\ell$ and from $D_u$. 
        \STATE
        $\theta_{k+1}=\theta_{k}-\textcolor{gray}{\gamma_\theta}	\partial_\theta \frac{1}{N_{\mathcal{B}}}\sum_{i\in \mathcal{B}} \ell (\theta_k,\phi_k)$
		\ENDFOR
		\STATE {\bfseries Output:} ${\phi_{N}}=\hat{\phi}^L,{\theta_{N}}$
	\end{algorithmic}
    \end{algorithm}
    \begin{algorithm}[tb]
	\caption{Debiased SSL algorithm for informative labels}
	\label{alg:second}
	\begin{algorithmic}
	\small
		\STATE {\bfseries Input:} labeled data $D_\ell$, unlabeled data $D_u$, $\hat{\phi}$ (if available)
		\STATE Initialize $\theta_0$ (at random)
		\FOR{$k=1$ {\bfseries to} \textcolor{gray}{$N'$}}		    
		\STATE Sample a Mini-Batch $\mathcal{B}$ of size \textcolor{gray}{$N_{\mathcal{B}}'$} from $D_\ell$ and from $D_u$.  
            \IF{$\hat{\phi}$ is not provided}
            \STATE Compute $\hat{\phi}_y, \forall y \in \mathcal{C}$ by the method of moments \eqref{eq:estim_moment}. 
            \ENDIF
            \STATE $\theta_{k+1}=\theta_{k}-\textcolor{gray}{\gamma'_\theta}	\partial_\theta \frac{1}{N'_{\mathcal{B}}}\sum_{i\in \mathcal{B}} \hat{\mathcal{R}}_{\hat{\phi}}^{\mathrm{SSL}}(\theta_k)$
		\ENDFOR
		\STATE {\bfseries Output:} $\theta_{N'}$
	\end{algorithmic}
    \end{algorithm}

\section{Numerical experiments}\label{sec:xp}

In this study, we evaluate the effectiveness of our proposed estimates of the missing-data mechanism using the benchmark dataset MNIST \citep{lecun-mnisthandwrittendigit-2010}. Additionally, our debiased approach (Algorithm \ref{alg:second}) of the classical SSL method using pseudo-labels \citep{rizve2021defense} is compared in both its original implementation (Pl) and its debiased version for MCAR labels (\textbf{DePl}) \citep{schmutz2022don}, using both the MNIST dataset and two datasets of MedMNIST \citep{yang2021medmnist}.
Furthermore, we compare our debiased version of Fixmatch \citep{sohn2020fixmatch}, designed to handle informative labels, with its original counterpart (\textbf{Fix}) and its debiased version for MCAR labels (\textbf{DeFix}) \citep{schmutz2022don} on the CIFAR-10 dataset \citep{krizhevsky2009learning}.

To evaluate the accuracy of our proposed estimates of the missing-data mechanism, we calculate the normalized Mean Squared Error (MSE) using the formula $\|\hat{\phi}-\phi^\star\|_2/\|\phi^\star\|^2$. This provides a measure of how well our estimate of the missing-data mechanism ($\hat{\phi}$) approximates the true mechanism ($\phi^\star$). We consider four different estimators of the missing-data mechanism.
\begin{itemize}[topsep=0pt,itemsep=1pt,leftmargin=*]
\item \textbf{MLE}: the maximum likelihood estimator derived from Algorithm \ref{alg:first}. As highlighted in Section \ref{sec:maximum_ll}, we use the estimation of $\theta$ given by Algorithm \ref{alg:second} when assessing the model's performance.
\item \textbf{ME}: the moment estimator derived from Algorithm \ref{alg:second} by using a moving averaging strategy \eqref{eq:buffer} for the class distribution.
\item \textbf{MEg}: the moment estimator derived from Algorithm \ref{alg:second} by propagating the gradients through $\theta$.
\item \textbf{CADR}: the estimator derived from \citet{hu2022on}. Although the authors did not propose an estimation of the missing-data mechanism, we are able to derive it directly from their estimation of the class proportions (see \eqref{eq:estim_moment}).
\end{itemize}


\subsection{MNIST and CIFAR-10 for toy mechanisms}
\label{sec:bench}

The MNIST dataset is an advantageous choice for SSL as the classes are well-separated, allowing us to verify the effectiveness of our method in simple cases. 
In order to randomly select the labeled and unlabeled data per class according to a specific distribution, we follow the method proposed by \citet{hu2022on}. The number of labeled data (or unlabeled data) in each class $k$ is determined by $n_k = n_{1} \gamma^{-\frac{k-1}{K-1}}, \forall k\in \mathcal{C}$, where $\gamma$ controls the degree of imbalance among the classes, with $\gamma=1$ resulting in a balanced distribution of labeled data among classes. Additionally, $n_1$ represents the maximum (or minimum) number of labeled data among all the classes. In particular, we consider two cases (see Figure \ref{fig:acc_unbalanced_MNIST}, Appendix \ref{sec:add_xp}):
\begin{enumerate}[label=\textbf{S\arabic*.},topsep=0pt]
    \item \label{setting1} when the dataset is balanced, we randomly select labeled data in each class with $n_1=400$ and $\gamma=10$, and the remaining data is considered as unlabeled.
    \item  \label{setting2} when the dataset is unbalanced, we randomly select labeled data (resp. unlabeled data) with $n_1=400$ and $\gamma=10$ for  (resp. $\gamma=0.1$).
\end{enumerate}

\ref{setting1} (resp. \ref{setting2}) leads to a percentage of observed labels of $3\%$ (resp. 9\%). We trained a 3-layer CNN for both Algorithm \ref{alg:first} and \ref{alg:second}. 
In terms of estimation of the missing-data mechanism, all methods have comparable and low MSE values in the balanced setting \ref{setting1}, as reported in Appendix \ref{sec:add_xp} (Table \ref{table:MNIST balanced}). In the unbalanced case \ref{setting2}, the estimation of the missing-data mechanism \textbf{CADR} underestimates the observed proportions in the four rarest classes (i.e. classes 0 to 3), as seen in Figure \ref{fig:estim_meca}, which leads to a highest MSE (see Table \ref{table:MNIST_unbalanced}). For model estimation, while in the balanced case \ref{setting1} the methods have comparable results, there is an improvement in both test accuracy and test loss with our methods that include mechanism estimation (\textbf{MLE}, \textbf{MEg} and \textbf{ME}), especially for the less observed classes (i.e. classes 5 to 9). Note also that in both cases the method of \citet{hu2022on} has the highest test loss, which can be explained by the fact that the objective function that they minimize is quite different as explained in Section \ref{sec:compar}.

\begin{table}[h]
\caption{Test accuracy and test loss on MNIST, Setting \ref{setting2}.} 
\label{table:MNIST_unbalanced}
\begin{center}
\begin{scriptsize}
\begin{sc}
\begin{tabular}{cccc}
\toprule
\textbf{Method}   &\textbf{Loss} & \textbf{Accuracy} & \textbf{MSE $\phi$} \\
\midrule
Pl & $0.141 \pm 0.018 $ & $92.95 \pm 0.55$ & 0.594 \\
DePl     & $0.138 \pm 0.015$ & $93.18 \pm 0.71$ & 0.594 \\
\midrule
CADR & $0.160 \pm 0.029$ & $89.15 \pm 0.99$ & 
$0.106 \pm 0.012$ \\
MLE (Ours) & $0.116 \pm 0.021$ & $94.29 \pm 0.11$ & $0.027 \pm 0.012$ \\
MEg (Ours) &  $0.103 \pm 0.009$ & $94.83 \pm 0.38$ & $0.022 \pm 0.004$ \\
ME (Ours) & $0.111 \pm 0.005$ & $94.59 \pm 0.28$ & $0.037 \pm 0.002$\\
\bottomrule
\end{tabular}
\end{sc}
\end{scriptsize}
\end{center}
\end{table}

\begin{figure}
\centering
\includegraphics[width=0.23\textwidth]{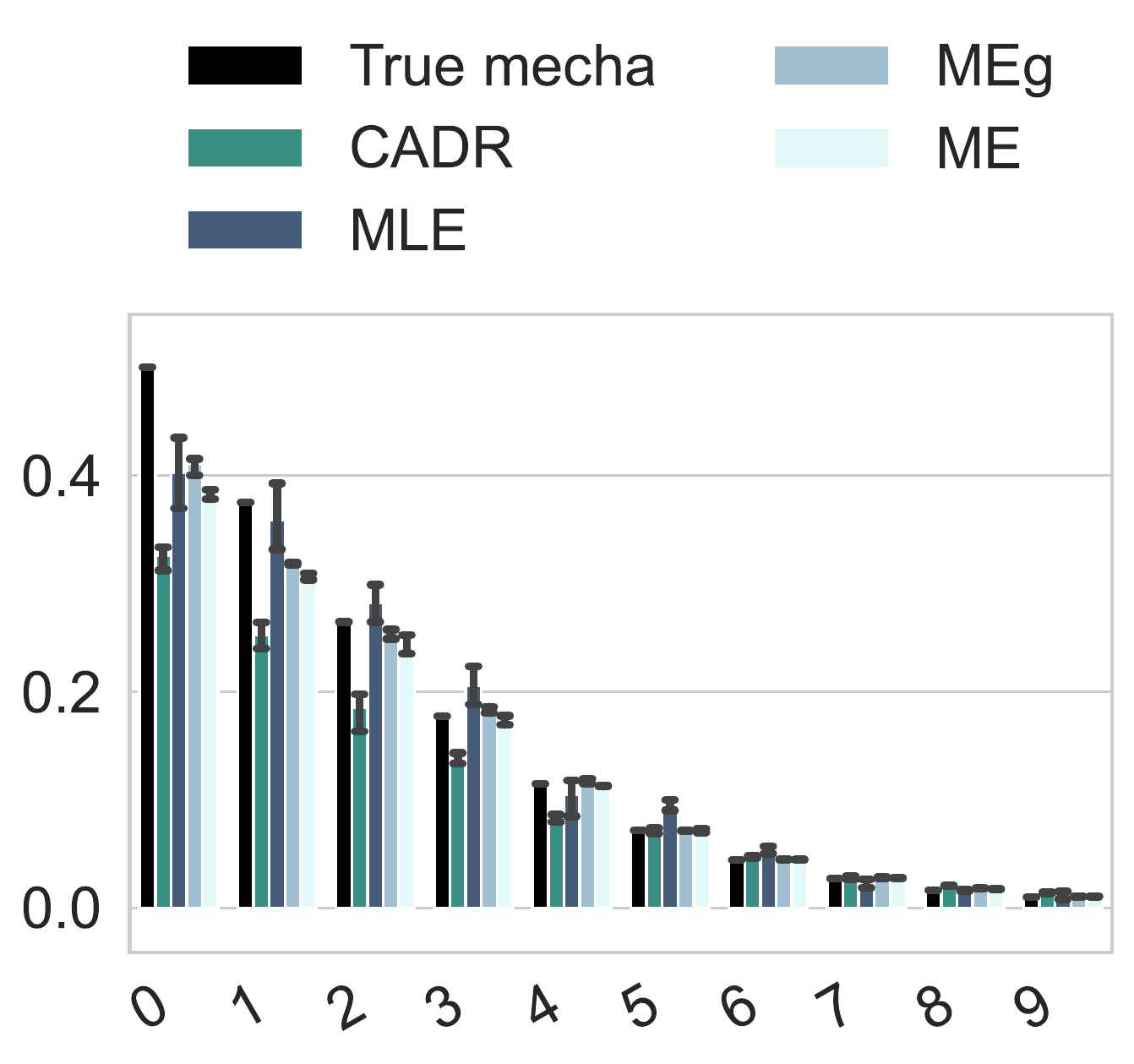}
\includegraphics[width=0.23\textwidth]{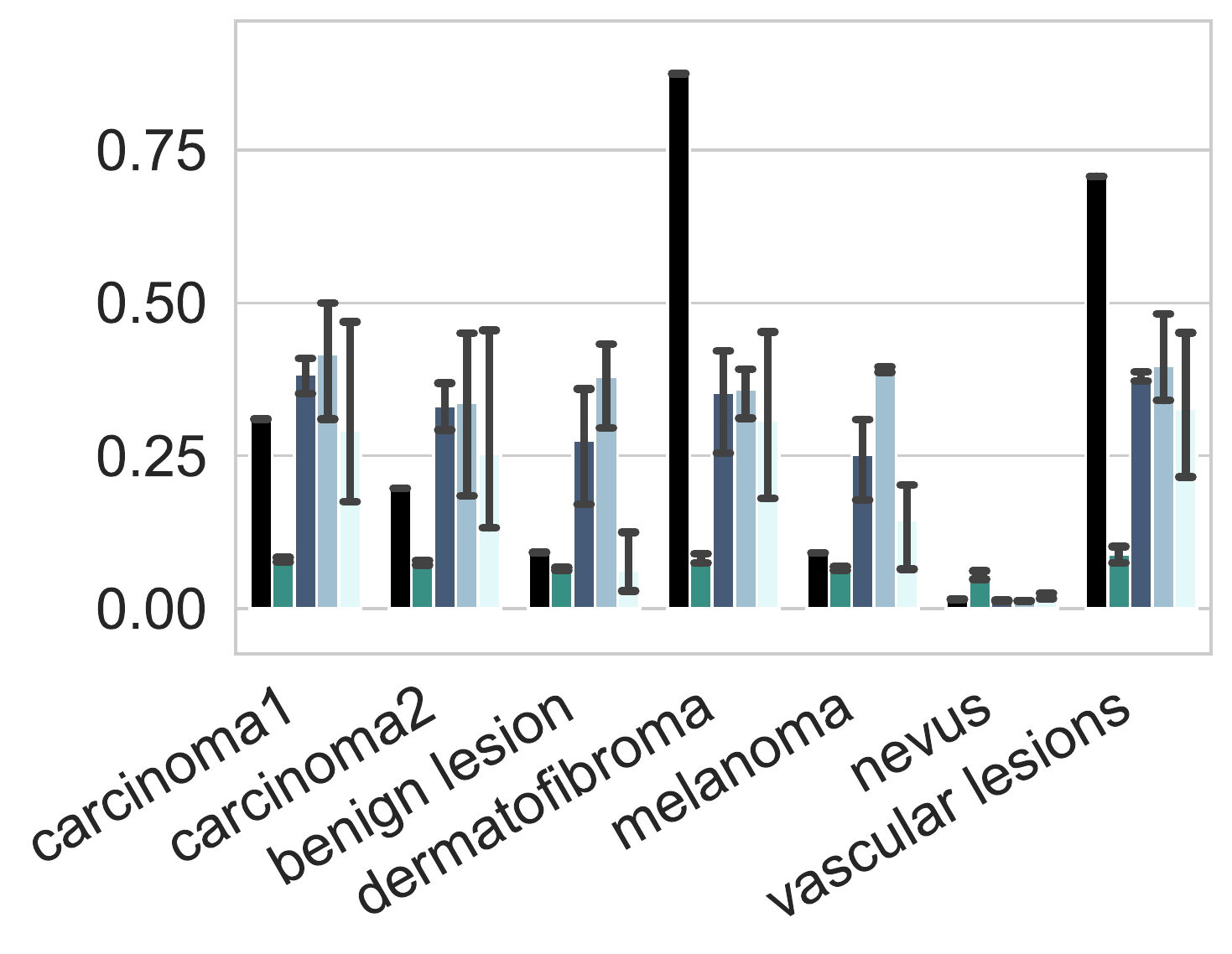}
 \caption{Estimation of the mechanism (coordinates of $\phi$ for each class) on MNIST (Setting \ref{setting2}) and on dermaMNIST.\label{fig:estim_meca}}
\end{figure}

In the CIFAR-10 dataset and considering Setting \ref{setting2}, we compare the original version of Fixmatch \citep{sohn2020fixmatch} 
with its debiased versions\footnote{Only the results for the moment estimator using averaging strategy (\textbf{ME}) are reported, as Algorithm \ref{alg:first} (\textbf{MLE}) has not yet been implemented with data augmentation and \textbf{MEg} proved difficult to calibrate. It can be challenging to find the right balance between too much initialization using $\hat{\phi}=n_\ell/n$ (leading to deviation from the optimal solution) and too little (leading to numerical problems). Therefore, we recommend considering the ME estimator when using data augmentation in practice.}. In Table \ref{table:CIFAR10_unbalanced}, our method \textbf{ME} demonstrates improved performance with higher overall accuracy. Again, we observe in Figure \ref{fig:acc_unbalanced_CIFAR10} (Appendix \ref{sec:add_xp}) that the classes with less observations (classes 5 to 10, particularly "dog" and "ship") are more accurately predicted when the missing-data mechanism is taken into account.

\begin{table}[h]
\caption{Test accuracy and test loss for CIFAR10, Setting \ref{setting2}.} 
\label{table:CIFAR10_unbalanced}
\vskip 0.15in
\begin{center}
\begin{scriptsize}
\begin{sc}
\begin{tabular}{ccc}
\toprule
\textbf{Method}   &\textbf{Loss} & \textbf{Accuracy}  \\
\midrule
Pl &  $0.426 \pm 0.017$ & $90.91 \pm 0.12$ \\
DePl      & $0.536 \pm 0.020$ & 
$89.71 \pm 0.16$\\
\midrule
CADR  & $0.452 \pm 0.006$& $91.14 \pm 0.30$ \\
ME (Ours) &  $0.321 \pm0.016$ & $91.88 \pm 0.24$\\
\bottomrule
\end{tabular}
\end{sc}
\end{scriptsize}
\end{center}
\vskip -0.1in
\end{table}

\subsection{medMNIST with pseudo-realistic mechanismns}\label{sec:medMNIST}

We first consider the dermaMNIST dataset \citep{codella2019skin}, which consists of 10,015 dermatoscopic images categorized into 7 different skin conditions. This dataset is unbalanced, its the most frequent class (71\% of the images) being benign naevi (aka moles). To simulate a more realistic MNAR scenario, we assume that a medical doctor would like to classify the conditions equally and select 70 images per class for labeling, resulting in 7\% of observed labels (see Figure \ref{fig:graphical}, left panel). Note that despite this selection, the dataset remains unbalanced due to the original distribution of the classes. Our three estimators (\textbf{MLE},\textbf{ME},\textbf{MEg}) of the missing-data mechanism detect that the class of naevi is very little observed compared to other classes (see Figure \ref{fig:estim_meca}), whereas \textbf{CADR} gives a mechanism where all classes are equally observed. 
In Table \ref{table:dermaMNIST}, only our methods determine if a lesion is a nevus or not with a high accuracy, which can be used as a pre-processing step before the images are reviewed by a medical doctor. Note that, even if the class proportions are known, \textbf{CADR} fails to give accurate results for the nevus class (see Table \ref{table:dermaMNIST_appendix} in Appendix \ref{sec:add_xp}), which shows the relevance of taking into account the missing-data mechanism in this case. Finally, we use \textbf{MLE} together with the test presented in Section \ref{sec:test} to assess whether or not labels are informative. If we generate MCAR labels, the likelihood ratio test is rigthfully unable to reject the MCAR hypothesis ($p$-value of 0.68$\pm$0.2 over 10 runs). But if we give it images with MNAR labels, the test rejects the MCAR hypothesis with very high confidence ($p$-value$<10^{-4}$ for all 10 runs).


\begin{table}[h]
\caption{Test accuracies on dermaMNIST and noduleMNIST3D. 
}
\label{table:dermaMNIST}
\vskip 0.15in
\begin{center}
\begin{scriptsize}
\begin{sc}
\begin{tabular}{cc|c}
\toprule
\textbf{Method}   & \textbf{dermaMNIST} & \textbf{noduleMNIST}  \\
\midrule
Pl & $57.72 \pm 1.95$ & 84.91\\
\midrule
CADR & $49.36 \pm 1.91$ & 80.32\\
MLE (Ours) &   $66.4 \pm 0.81$& 85.8 \\
MEg (Ours) & $66.65 \pm 1.76$  & 82.26\\
ME (Ours) & $65.8 \pm 0.78$  & 85.16 \\
\bottomrule
\end{tabular}
\end{sc}
\end{scriptsize}
\end{center}
\vskip -0.1in
\end{table}

We now consider the noduleMNIST3D dataset \citep{armato2011lung} on images from thoracic CT scans, which is of particular interest to simulate the MNAR labels. We have access to the subtlety score $s$, which describes from 1 (extremely subtle) to 5 (obvious) the difficulty of nodule detection. According to these scores, we simulate the missing-data mechanism using $p(r|y)=\sum_{s=1}^5 p(r|s)p(s|y)$, with $p(s|y)$ computed on the data. The only quantities to choose are the probability of being observed given the subtlety score, we fix a low probability when the detection was difficult ($p(r|s\in \{1,2,3\})=0.1$) and a high probability when the detection was easy ($p(r|s\in \{4,5\})=0.9$). At the end, the class of benign nodules has a missing-data proportion (43\%) higher than the class of malignant nodules (8\%). On the contrary to the missing-data setting chosen for dermaMNIST, the more observed class is also the less frequent (Figure \ref{fig:noduleMNIST}). \textbf{MLE} performs better in terms of accuracy (Table \ref{table:dermaMNIST}) and all the methods designed for informative labels has a highest specificity than the classical one \textbf{Pl} for the class of malignant nodules (Table \ref{tab:noduleMNIST} in Appendix \ref{sec:add_xp}).

\section{Conclusion}

For future works, we would be eager to (i) provide a more realistic theoretical grounding without freezing $\theta$ and (ii) propose another statistical test for the moment estimator, for example using bootstrap strategies. Note that the latter perspective is quite challenging, because of the sample bias in the informative data.




\bibliography{biblio_ICML}

\begin{thebibliography}{43}
\providecommand{\natexlab}[1]{#1}
\providecommand{\url}[1]{\texttt{#1}}
\expandafter\ifx\csname urlstyle\endcsname\relax
  \providecommand{\doi}[1]{doi: #1}\else
  \providecommand{\doi}{doi: \begingroup \urlstyle{rm}\Url}\fi

\bibitem[Ahfock \& McLachlan(2019)Ahfock and McLachlan]{ahfock2019missing}
Ahfock, D. and McLachlan, G.~J.
\newblock On missing label patterns in semi-supervised learning.
\newblock \emph{arXiv preprint arXiv:1904.02883}, 2019.

\bibitem[Armato~III et~al.(2011)Armato~III, McLennan, Bidaut, McNitt-Gray,
  Meyer, Reeves, Zhao, Aberle, Henschke, Hoffman, et~al.]{armato2011lung}
Armato~III, S.~G., McLennan, G., Bidaut, L., McNitt-Gray, M.~F., Meyer, C.~R.,
  Reeves, A.~P., Zhao, B., Aberle, D.~R., Henschke, C.~I., Hoffman, E.~A.,
  et~al.
\newblock The lung image database consortium (lidc) and image database resource
  initiative (idri): a completed reference database of lung nodules on ct
  scans.
\newblock \emph{Medical physics}, 38\penalty0 (2):\penalty0 915--931, 2011.

\bibitem[Baker \& Laird(1988)Baker and Laird]{baker1988regression}
Baker, S.~G. and Laird, N.~M.
\newblock Regression analysis for categorical variables with outcome subject to
  nonignorable nonresponse.
\newblock \emph{Journal of the American Statistical association}, 83\penalty0
  (401):\penalty0 62--69, 1988.

\bibitem[Berthelot et~al.(2019a)Berthelot, Carlini, Cubuk, Kurakin, Sohn,
  Zhang, and Raffel]{berthelot2019remixmatch}
Berthelot, D., Carlini, N., Cubuk, E.~D., Kurakin, A., Sohn, K., Zhang, H., and
  Raffel, C.
\newblock Remixmatch: Semi-supervised learning with distribution alignment and
  augmentation anchoring.
\newblock \emph{International Conference on Learning Representations}, 2019a.

\bibitem[Cao et~al.(2021)Cao, Brbic, and Leskovec]{cao2021open}
Cao, K., Brbic, M., and Leskovec, J.
\newblock Open-world semi-supervised learning.
\newblock \emph{arXiv preprint arXiv:2102.03526}, 2021.

\bibitem[Chapelle et~al.(2009)Chapelle, Scholkopf, and Zien]{chapelle2009semi}
Chapelle, O., Scholkopf, B., and Zien, A.
\newblock Semi-supervised learning.
\newblock \emph{IEEE Transactions on Neural Networks}, 20\penalty0
  (3):\penalty0 542--542, 2009.

\bibitem[Chen et~al.(2020)Chen, Zhu, Li, and Gong]{chen2020semi}
Chen, Y., Zhu, X., Li, W., and Gong, S.
\newblock Semi-supervised learning under class distribution mismatch.
\newblock In \emph{Proceedings of the AAAI Conference on Artificial
  Intelligence}, volume~34, pp.\  3569--3576, 2020.

\bibitem[Codella et~al.(2019)Codella, Rotemberg, Tschandl, Celebi, Dusza,
  Gutman, Helba, Kalloo, Liopyris, Marchetti, et~al.]{codella2019skin}
Codella, N., Rotemberg, V., Tschandl, P., Celebi, M.~E., Dusza, S., Gutman, D.,
  Helba, B., Kalloo, A., Liopyris, K., Marchetti, M., et~al.
\newblock Skin lesion analysis toward melanoma detection 2018: A challenge
  hosted by the international skin imaging collaboration (isic).
\newblock \emph{arXiv preprint arXiv:1902.03368}, 2019.

\bibitem[d’Haultfoeuille(2010)]{d2010new}
d’Haultfoeuille, X.
\newblock A new instrumental method for dealing with endogenous selection.
\newblock \emph{Journal of Econometrics}, 154\penalty0 (1):\penalty0 1--15,
  2010.

\bibitem[Grandvalet \& Bengio(2004)Grandvalet and Bengio]{grandvalet2004semi}
Grandvalet, Y. and Bengio, Y.
\newblock Semi-supervised learning by entropy minimization.
\newblock \emph{Advances in neural information processing systems}, 17, 2004.

\bibitem[Guo et~al.(2020)Guo, Zhang, Jiang, Li, and Zhou]{guo2020safe}
Guo, L.-Z., Zhang, Z.-Y., Jiang, Y., Li, Y.-F., and Zhou, Z.-H.
\newblock Safe deep semi-supervised learning for unseen-class unlabeled data.
\newblock In \emph{International Conference on Machine Learning}, pp.\
  3897--3906. PMLR, 2020.

\bibitem[Hu et~al.(2022)Hu, Niu, Miao, Hua, and Zhang]{hu2022on}
Hu, X., Niu, Y., Miao, C., Hua, X.-S., and Zhang, H.
\newblock On non-random missing labels in semi-supervised learning.
\newblock In \emph{International Conference on Learning Representations}, 2022.

\bibitem[Huang et~al.(2021)Huang, Xue, Han, Yang, and Gong]{huang2021universal}
Huang, Z., Xue, C., Han, B., Yang, J., and Gong, C.
\newblock Universal semi-supervised learning.
\newblock In \emph{Thirty-Fifth Conference on Neural Information Processing
  Systems}, 2021.

\bibitem[Ibrahim \& Lipsitz(1996)Ibrahim and Lipsitz]{ibrahim1996parameter}
Ibrahim, J.~G. and Lipsitz, S.~R.
\newblock Parameter estimation from incomplete data in binomial regression when
  the missing data mechanism is nonignorable.
\newblock \emph{Biometrics}, pp.\  1071--1078, 1996.

\bibitem[Ibrahim et~al.(2001)Ibrahim, Chen, and Lipsitz]{ibrahim2001missing}
Ibrahim, J.~G., Chen, M.-H., and Lipsitz, S.~R.
\newblock Missing responses in generalised linear mixed models when the missing
  data mechanism is nonignorable.
\newblock \emph{Biometrika}, 88\penalty0 (2):\penalty0 551--564, 2001.

\bibitem[Krizhevsky et~al.(2009)Krizhevsky, Hinton,
  et~al.]{krizhevsky2009learning}
Krizhevsky, A., Hinton, G., et~al.
\newblock Learning multiple layers of features from tiny images.
\newblock 2009.

\bibitem[Le~Morvan et~al.(2020)Le~Morvan, Prost, Josse, Scornet, and
  Varoquaux]{le2020linear}
Le~Morvan, M., Prost, N., Josse, J., Scornet, E., and Varoquaux, G.
\newblock Linear predictor on linearly-generated data with missing values: non
  consistency and solutions.
\newblock In \emph{International Conference on Artificial Intelligence and
  Statistics}, pp.\  3165--3174. PMLR, 2020.

\bibitem[LeCun \& Cortes(2010)LeCun and
  Cortes]{lecun-mnisthandwrittendigit-2010}
LeCun, Y. and Cortes, C.
\newblock {MNIST} handwritten digit database.
\newblock 2010.
\newblock URL \url{http://yann.lecun.com/exdb/mnist/}.

\bibitem[Little \& Rubin(2019)Little and Rubin]{little2019statistical}
Little, R.~J. and Rubin, D.~B.
\newblock \emph{Statistical analysis with missing data}, volume 793.
\newblock John Wiley \& Sons, 2019.

\bibitem[Liu \& Goldberg(2020)Liu and Goldberg]{liu2020kernel}
Liu, T. and Goldberg, Y.
\newblock Kernel machines with missing responses.
\newblock \emph{Electronic Journal of Statistics}, 14\penalty0 (2):\penalty0
  3766--3820, 2020.

\bibitem[Miao \& Tchetgen~Tchetgen(2016)Miao and
  Tchetgen~Tchetgen]{miao2016varieties}
Miao, W. and Tchetgen~Tchetgen, E.~J.
\newblock On varieties of doubly robust estimators under missingness not at
  random with a shadow variable.
\newblock \emph{Biometrika}, 103\penalty0 (2):\penalty0 475--482, 2016.

\bibitem[Miao et~al.(2016)Miao, Ding, and Geng]{miao2016identifiability}
Miao, W., Ding, P., and Geng, Z.
\newblock Identifiability of normal and normal mixture models with nonignorable
  missing data.
\newblock \emph{Journal of the American Statistical Association}, 111\penalty0
  (516):\penalty0 1673--1683, 2016.

\bibitem[Miao et~al.(2019)Miao, Liu, Tchetgen, and
  Geng]{miao2015identification}
Miao, W., Liu, L., Tchetgen, E.~T., and Geng, Z.
\newblock Identification, doubly robust estimation, and semiparametric
  efficiency theory of nonignorable missing data with a shadow variable.
\newblock \emph{arXiv preprint arXiv:1509.02556}, 2019.

\bibitem[Mohan(2018)]{mohan2018handling}
Mohan, K.
\newblock On handling self-masking and other hard missing data problems.
\newblock In \emph{AAAI Symposium 2018}, 2018.

\bibitem[Molenberghs et~al.(2008)Molenberghs, Beunckens, Sotto, and
  Kenward]{molenberghs2008every}
Molenberghs, G., Beunckens, C., Sotto, C., and Kenward, M.~G.
\newblock Every missingness not at random model has a missingness at random
  counterpart with equal fit.
\newblock \emph{Journal of the Royal Statistical Society: Series B (Statistical
  Methodology)}, 70\penalty0 (2):\penalty0 371--388, 2008.

\bibitem[Morikawa et~al.(2017)Morikawa, Kim, and
  Kano]{morikawa2017semiparametric}
Morikawa, K., Kim, J.~K., and Kano, Y.
\newblock Semiparametric maximum likelihood estimation with data missing not at
  random.
\newblock \emph{Canadian Journal of Statistics}, 45\penalty0 (4):\penalty0
  393--409, 2017.

\bibitem[Murphy \& Van~der Vaart(2000)Murphy and Van~der
  Vaart]{murphy2000profile}
Murphy, S.~A. and Van~der Vaart, A.~W.
\newblock On profile likelihood.
\newblock \emph{Journal of the American Statistical Association}, 95\penalty0
  (450):\penalty0 449--465, 2000.

\bibitem[Neuberg(2003)]{neuberg2003causality}
Neuberg, L.~G.
\newblock Causality: models, reasoning, and inference, by judea pearl,
  cambridge university press, 2000.
\newblock \emph{Econometric Theory}, 19\penalty0 (4):\penalty0 675--685, 2003.

\bibitem[Paszke et~al.(2017)Paszke, Gross, Chintala, Chanan, Yang, DeVito, Lin,
  Desmaison, Antiga, and Lerer]{paszke2017automatic}
Paszke, A., Gross, S., Chintala, S., Chanan, G., Yang, E., DeVito, Z., Lin, Z.,
  Desmaison, A., Antiga, L., and Lerer, A.
\newblock Automatic differentiation in pytorch.
\newblock 2017.

\bibitem[Rabe-Hesketh \& Skrondal(2022)Rabe-Hesketh and
  Skrondal]{rabe2022ignoring}
Rabe-Hesketh, S. and Skrondal, A.
\newblock Ignoring non-ignorable missingness.
\newblock \emph{Psychometrika}, pp.\  1--20, 2022.

\bibitem[Rizve et~al.(2021)Rizve, Duarte, Rawat, and Shah]{rizve2021defense}
Rizve, M.~N., Duarte, K., Rawat, Y.~S., and Shah, M.
\newblock In defense of pseudo-labeling: An uncertainty-aware pseudo-label
  selection framework for semi-supervised learning.
\newblock \emph{arXiv preprint arXiv:2101.06329}, 2021.

\bibitem[Rubin(1976)]{rubin1976inference}
Rubin, D.~B.
\newblock Inference and missing data.
\newblock \emph{Biometrika}, 63\penalty0 (3):\penalty0 581--592, 1976.

\bibitem[Schmutz et~al.(2022)Schmutz, Humbert, and Mattei]{schmutz2022don}
Schmutz, H., Humbert, O., and Mattei, P.-A.
\newblock Don't fear the unlabelled: safe deep semi-supervised learning via
  simple debiasing.
\newblock \emph{arXiv preprint arXiv:2203.07512}, 2022.

\bibitem[Shao \& Wang(2016)Shao and Wang]{shao2016semiparametric}
Shao, J. and Wang, L.
\newblock Semiparametric inverse propensity weighting for nonignorable missing
  data.
\newblock \emph{Biometrika}, 103\penalty0 (1):\penalty0 175--187, 2016.

\bibitem[Sohn et~al.(2020)Sohn, Berthelot, Li, Zhang, Carlini, Cubuk, Kurakin,
  Zhang, and Raffel]{sohn2020fixmatch}
Sohn, K., Berthelot, D., Li, C.-L., Zhang, Z., Carlini, N., Cubuk, E.~D.,
  Kurakin, A., Zhang, H., and Raffel, C.
\newblock Fixmatch: Simplifying semi-supervised learning with consistency and
  confidence.
\newblock \emph{Advances in Neural Information Processing Systems},
  33:\penalty0 596--608, 2020.

\bibitem[Sportisse et~al.(2020)Sportisse, Boyer, and
  Josse]{sportisse2020estimation}
Sportisse, A., Boyer, C., and Josse, J.
\newblock Estimation and imputation in probabilistic principal component
  analysis with missing not at random data.
\newblock \emph{Advances in Neural Information Processing Systems},
  33:\penalty0 7067--7077, 2020.

\bibitem[Tang \& Ju(2018)Tang and Ju]{tang2018statistical}
Tang, N. and Ju, Y.
\newblock Statistical inference for nonignorable missing-data problems: a
  selective review.
\newblock \emph{Statistical Theory and Related Fields}, 2\penalty0
  (2):\penalty0 105--133, 2018.

\bibitem[Van~der Vaart(2000)]{van2000asymptotic}
Van~der Vaart, A.~W.
\newblock \emph{Asymptotic statistics}, volume~3.
\newblock Cambridge university press, 2000.

\bibitem[Van~Engelen \& Hoos(2020)Van~Engelen and Hoos]{van2020survey}
Van~Engelen, J.~E. and Hoos, H.~H.
\newblock A survey on semi-supervised learning.
\newblock \emph{Machine Learning}, 109\penalty0 (2):\penalty0 373--440, 2020.

\bibitem[Wainwright(2019)]{wainwright2019high}
Wainwright, M.~J.
\newblock \emph{High-dimensional statistics: A non-asymptotic viewpoint},
  volume~48.
\newblock Cambridge University Press, 2019.

\bibitem[Wei et~al.(2021)Wei, Sohn, Mellina, Yuille, and Yang]{wei2021crest}
Wei, C., Sohn, K., Mellina, C., Yuille, A., and Yang, F.
\newblock Crest: A class-rebalancing self-training framework for imbalanced
  semi-supervised learning.
\newblock In \emph{Proceedings of the IEEE/CVF Conference on Computer Vision
  and Pattern Recognition}, pp.\  10857--10866, 2021.

\bibitem[Xie et~al.(2020)Xie, Dai, Hovy, Luong, and Le]{xie2020unsupervised}
Xie, Q., Dai, Z., Hovy, E., Luong, T., and Le, Q.
\newblock Unsupervised data augmentation for consistency training.
\newblock \emph{Advances in Neural Information Processing Systems},
  33:\penalty0 6256--6268, 2020.

\bibitem[Yang et~al.(2021)Yang, Shi, Wei, Liu, Zhao, Ke, Pfister, and
  Ni]{yang2021medmnist}
Yang, J., Shi, R., Wei, D., Liu, Z., Zhao, L., Ke, B., Pfister, H., and Ni, B.
\newblock Medmnist v2: A large-scale lightweight benchmark for 2d and 3d
  biomedical image classification.
\newblock \emph{arXiv preprint arXiv:2110.14795}, 2021.

\end{thebibliography}
\bibliographystyle{icml2022}


\newpage
\appendix
\onecolumn

\section{Proof of identification}\label{sec:proof_id}

\begin{proposition}[Identification of the joint distribution]
Under Assumptions \ref{ass:ourSSL2} (self-masked MNAR), the joint distribution $p(y,x,r)$ is identified. 
\end{proposition}
\begin{proof}
This proof is a direct application of Theorem 1 of \citep{miao2015identification} (stated in a more general case). For clarity, we give it again here in our setting for clarity by proving the intermediate results. 

Let us consider the odds ratio $OR(y)=\frac{p(y|r=0)p(y=1|r=1)}{p(y|r=1)p(y=1|r=0)}$ where $y=1$ is used as a reference value. 
The goal is to determine $p(y|r=0,x)$ by only quantities involving observed data, and it will imply that the joint distribution $p(y,x,r)$ is identified. 
Proposition 2 of \citep{miao2015identification} gives the two following equalities:
\begin{align}
\label{eq:id_2}
p(y|r=0,x)&=\frac{OR(y)p(y|r=1,x)}{\mathbb{E}[OR(y)|r=1,x]} \\
\label{eq:id_1}
\mathbb{E}[\tilde{OR}(y)|r=1,x]&=\frac{p(x|r=0)}{p(x|r=1)} \mbox{ with $\tilde{OR}(y)=\frac{OR(y)}{\mathbb{E}[OR(y)|r=1]}$}
\end{align}
The first equation \eqref{eq:id_2} indicates that the identification of the odds ratio function involves the identification of $p(y|r=0,x)$. Note that in \eqref{eq:id_1}, as $p(x|r=0), p(x|r=1)$ and $p(y|r=1,x)$ are obtained from observed data, so is $\mathbb{E}[\tilde{OR}(y)|r=1,x]$. We just have to prove that \eqref{eq:id_2} has a unique solution. Let us consider $\tilde{OR}^\star(y)\neq \tilde{OR}^\star(y)$, we have
$$\mathbb{E}[\tilde{OR}^\star(y)|r=1,x]=\frac{p(x|r=0)}{p(x|r=1)},$$ which implies $$\mathbb{E}[\tilde{OR}^\star(y)-\tilde{OR}(y)|r=1,x]=0 \Leftrightarrow \tilde{OR}^\star(y)=\tilde{OR}(y),$$
This is obtained by using Condition 1 of \citep{miao2015identification}. In our case, it amounts to assuming that for any image, there is a non-zero probability that it has any label.
$\tilde{OR}(y)$ is identified and so is ${OR}(y)$, noting that $OR(y)=\frac{\tilde{OR}(y)}{\tilde{OR}(y=0)}$.

\end{proof}

\section{Proofs of the theoretical results of Section \ref{sec:estim_meca}}
\label{sec:proofs_Section4}

\subsection{Moment estimator}

The moment estimator has the following form, for a fixed $\theta \in \Theta$:
$$
\hat{\phi}_y^{M}=\frac{\sum_{i=1}^n \mathbb{1}_{\{r=1,y_i=y\}}}{n}\frac{1}{\hat{p}(y,\theta)}, \forall y \in \mathcal{C}$$
with $$\hat{p}(y,\theta):=\hat{p}_\theta(y)=\frac{1}{n}\sum_{i=1}^{n}p(y_i|x_i;\theta)$$
In this section, we prove the consistency of this estimator. 

\begin{proposition}[Consistency of $\hat{\phi}^M$]
    The moment estimator defined by \eqref{eq:estim_moment} is consistent for a fixed $\theta \in \Theta$. 
\end{proposition}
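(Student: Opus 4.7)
The plan is to recognize $\hat{\phi}_y^M$ as the ratio of two sample averages, apply the weak law of large numbers to the numerator and denominator separately, and conclude by Slutsky's theorem. Throughout, $\theta \in \Theta$ is fixed and the model is assumed to be well specified in the sense stated in the paper just after \eqref{eq:py}, namely $p(y) = \int p(y|x;\theta)\, p(x)\, dx$.

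First, I would treat the numerator. The random variables $Z_i^{(y)} := \mathbb{1}_{\{r_i = 1,\, y_i = y\}}$ are i.i.d.\ and bounded by $1$, so by the weak law of large numbers
\[
\frac{1}{n}\sum_{i=1}^n \mathbb{1}_{\{r_i=1,\, y_i=y\}} \;\xrightarrow{\mathbb{P}}\; \mathbb{P}(r=1, y) \;=\; \mathbb{P}(r=1 \mid y)\, p(y) \;=\; \phi_y\, p(y),
\]
where the last equality uses the self-masked MNAR assumption~\ref{ass:ourSSL2} together with the definition $\phi_y = \mathbb{P}(r=1 \mid y)$.

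Next, I would handle the denominator $\hat{p}(y,\theta) = \frac{1}{n}\sum_{i=1}^n p(y \mid x_i;\theta)$. For fixed $\theta$, the random variables $W_i^{(y)} := p(y \mid x_i;\theta)$ are i.i.d.\ and bounded in $[0,1]$, so another application of the weak law of large numbers gives
\[
\hat{p}(y,\theta) \;\xrightarrow{\mathbb{P}}\; \mathbb{E}_{x \sim p(x)}\!\left[ p(y \mid x;\theta) \right] \;=\; \int p(y \mid x;\theta)\, p(x)\, dx \;=\; p(y),
\]
invoking the well-specification noted above.

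Finally, I would conclude by Slutsky's theorem (or equivalently the continuous mapping theorem applied to the map $(a,b) \mapsto a/b$, which is continuous at any point with $b > 0$). Under the mild regularity condition that $p(y) > 0$ for every $y \in \mathcal{C}$, the ratio converges in probability:
\[
\hat{\phi}_y^M \;=\; \frac{\tfrac{1}{n}\sum_{i=1}^n \mathbb{1}_{\{r_i=1,\, y_i=y\}}}{\hat{p}(y,\theta)} \;\xrightarrow{\mathbb{P}}\; \frac{\phi_y\, p(y)}{p(y)} \;=\; \phi_y,
\]
yielding consistency. The only delicate point is the well-specification hypothesis used in the denominator convergence; without it the limit would be $\int p(y \mid x;\theta)\, p(x)\, dx$ rather than $p(y)$, which is why the result is stated for a fixed (presumably well-chosen) $\theta$. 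No further obstacle arises, since boundedness of indicators and of the modeled probabilities makes the laws of large numbers immediate.
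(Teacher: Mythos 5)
Your proposal is correct and follows essentially the same route as the paper's proof: weak law of large numbers applied separately to the numerator $\frac{1}{n}\sum_i \mathbb{1}_{\{r_i=1,\,y_i=y\}}$ and to $\hat{p}(y,\theta)$, then the continuous mapping theorem (requiring $p(y)>0$) and Slutsky's theorem to pass to the ratio. Your remark that the limit identification in the denominator hinges on the well-specification $p(y)=\int p(y|x;\theta)\,p(x)\,dx$ matches the implicit assumption the paper makes when introducing $\hat{p}(y;\theta)$.
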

\begin{proof}
We have by the law of large numbers, 
\begin{align}
\label{eq:proba_1}
\frac{1}{n}\sum_{i=1}^n\mathbb{1}_{\{r_i=1,y_i=y\}} &\underset{n \to +\infty}{\overset{\mathbb{P}}{\longrightarrow}} \mathbb{E}[\mathbb{1}_{\{r_i=1,y_i=y\}}]=p(r=1,y) \\
\label{eq:proba_2}
\frac{1}{n}\sum_{i=1}^n p(y_i|x_i;\theta)&\underset{n \to +\infty}{\overset{\mathbb{P}}{\longrightarrow}} \mathbb{E}[p(y|x)]=p(y)
\end{align}
We can apply the continuous mapping theorem to $f(x)=1/x, x\in]0,1]$ (assuming that the probability of each class is greater than 0) to get
\begin{equation}\label{eq:proba_3}
\frac{1}{\frac{1}{n}\sum_{i=1}^n p(y_i|x_i)} \underset{n \to +\infty}{\overset{\mathbb{P}}{\longrightarrow}} \frac{1}{p(y)}
\end{equation}

In \eqref{eq:proba_1} and \eqref{eq:proba_3}, we have the convergence in probability to a constant, which implies by Slutsky's theorem that the product converges in probability to the product of the constants (Slutsky's theorem gives the property in distribution but convergence in probability and law is equivalent if the limit is a constant), which gives:

$$\forall y \in \mathcal{C}, \hat{\phi}_y^M \underset{n \to +\infty}{\overset{\mathbb{P}}{\longrightarrow}} \phi_y=\frac{p(r=1,y)}{p(y)}$$

\end{proof}

\subsection{Maximum likelihood estimator}\label{sec:app_estimationmecha}

In this section, we study the negative observed log-likelihood $\phi\mapsto \ell(\theta,\phi;x,y\odot r,r)$ and derive the consistency and the asymptotic normality of the maximum likelihood estimator $\phi^L$ given in \eqref{eq:estim_ll}. All the results are obtained for a fixed $\theta \in \Theta$. 
Let us recall: 
\begin{equation}\label{eq:loglikeli_app}
\ell(\theta,\phi;x,y\odot r,r)= -\sum_{i=1}^{n} r_i \log p(y_i|x_i;\theta)\phi_{y_i}
- \sum_{i=1}^{n} (1-r_i) \log \sum_{\tilde{y} \in \mathcal{C}} p(\tilde{y}|x_i;\theta)(1-\phi_{\tilde{y}})
\end{equation}
For simplicity, as $\theta$ is fixed in this section, let us define the function $\ell_\theta:\mathbb{R^K}\mapsto\mathbb{R}$  such that $\ell_\theta(\phi)=\ell(\theta,\phi,x,y\odot r,r)$.

\subsubsection{Convexity}

\begin{proposition}[Convexity of $\ell_\theta(\phi;x,y\odot r,r)$]\label{prop:conv}
For a fixed $\theta \in \Theta$, the negative observed log-likelihood $\phi\mapsto \ell(\theta,\phi;x,y\odot r,r)$ is convex.
\end{proposition}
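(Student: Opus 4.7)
The plan is to decompose $\ell_\theta(\phi)$ into a sum of contributions coming from labeled and unlabeled samples, and to verify convexity in $\phi$ on each piece separately; the total is then convex as a finite sum of convex functions.

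First I would handle the labeled contributions. For every $i$ with $r_i = 1$, the term is
\[
-\log\bigl[p(y_i\mid x_i;\theta)\phi_{y_i}\bigr] = -\log p(y_i\mid x_i;\theta) - \log \phi_{y_i}.
\]
The first summand is independent of $\phi$ and hence affine (in particular convex). The second summand is $-\log$ composed with the coordinate projection $\phi \mapsto \phi_{y_i}$, which is linear, so convexity of $t\mapsto -\log t$ on $(0,1]$ transfers to convexity of $\phi \mapsto -\log \phi_{y_i}$ on the interior of $\Phi$.

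Next I would handle the unlabeled contributions. For every $i$ with $r_i = 0$, the term is
\[
-\log\sum_{\tilde y\in\mathcal{C}} p(\tilde y\mid x_i;\theta)(1-\phi_{\tilde y}).
\]
The map $\phi \mapsto \sum_{\tilde y} p(\tilde y\mid x_i;\theta)(1-\phi_{\tilde y})$ is affine in $\phi$ with nonnegative coefficients $p(\tilde y\mid x_i;\theta)$ summing to $1$, and it takes values in $[0,1]$. Since $-\log$ is convex and nonincreasing on $(0,\infty)$ and composition of a convex function with an affine map is convex, this term is convex in $\phi$ on the set where the affine combination is positive.

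Finally I would conclude by noting that $\ell_\theta(\phi)$ is a finite sum of convex functions and is therefore convex on the (convex) domain where all arguments of the logarithms are strictly positive, which contains the interior of $\Phi$ assumed in Assumption \ref{ass:ll2}. Since every step is just ``affine composed with $-\log$ is convex,'' I do not foresee any genuine obstacle; the only delicate point worth stating explicitly is the domain restriction ensuring the logarithms are well defined, which is why the proposition is stated on $\Phi=[0,1]^K$ with the interior assumption from Section~\ref{sec:maximum_ll}.
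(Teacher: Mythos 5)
Your proof is correct, but it takes a genuinely different route from the paper's. You argue structurally: each labeled term is $-\log$ of a coordinate projection (plus a $\phi$-independent constant) and each unlabeled term is $-\log$ composed with an affine map of $\phi$, so every summand is convex by the standard composition rule and the sum is convex. The paper instead computes the full Hessian $H(\phi)$ of $\ell_\theta$ explicitly and verifies $v^{T}Hv\geq 0$ by rewriting the unlabeled contribution as a sum of squares, $\sum_{i}\bigl(\sum_{k}v_{k}B_{i,k}\sqrt{(1-r_i)/A_i^{2}}\bigr)^{2}$. Your argument is shorter and less error-prone; one minor note is that monotonicity of $-\log$ is not needed for the unlabeled terms (pre-composition with an affine map preserves convexity regardless of the sign of the coefficients), and that Assumption \ref{ass:ll2} is introduced in the paper only for the consistency results, so you should instead invoke the domain restriction discussed in Remark \ref{rem:min}. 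What the paper's heavier computation buys is reuse: the explicit Hessian is needed again in Lemma \ref{prop:Fisher}, where the Fisher information is written as $-\tfrac{1}{n}\mathbb{E}[H(\phi)]$ and its nonsingularity is deduced from strict positivity of $v^{T}Hv$ when at least one sample is labeled --- a strictness claim that the composition argument alone does not deliver, since a sum of convex functions need not be strictly convex. If you wanted your route to support that later lemma, you would still have to identify where strict convexity comes from (the $-\log\phi_{y_i}$ terms of the labeled samples), which essentially brings the Hessian back into the picture.
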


\begin{proof}
Let us remark that:
\begin{align*}
\ell_\theta(\phi)&=-\sum_{i=1}^{n} r_i \log p(y_i|x_i)\phi_{y_i}
- \sum_{i=1}^{n} (1-r_i) \log \sum_{\tilde{y} \in \mathcal{C}} p(\tilde{y}|x_i)(1-\phi_{\tilde{y}}) \\
&=-\sum_{i=1}^{n} r_i \log p(y_i|x_i)\phi_{y_i}
- \sum_{i=1}^{n} (1-r_i) \log (-\phi_k B_{i,k} + C_{i,k}),
\end{align*}
where $B_{i,k}=p(y_i=k|x_i;\theta)$ and $C_{i,k}=B_{i,k}+\sum_{\tilde{y}\in \mathcal{C}\setminus\{k\}} p(\tilde{y}|x_i;\theta)(1-\phi_{\tilde{y}}).$
Let us compute the Hessian $H$ of $F$, such that
$$H(\phi):=\left(\frac{\partial^2 \ell_\theta}{\partial \phi_\ell\partial \phi_k} (\phi)\right)_{k,\ell \in \mathcal{C}}$$
We have:
$$\forall k \in \mathcal{C}, \frac{\partial \ell_\theta}{\partial \phi_k}(\phi)=-\left(\sum_{i=1}^{n} r_i \frac{1}{\phi_k}\mathbb{1}_{y_i=k} - \sum_{i=1}^n (1-r_i)\frac{B_{i,k}}{-\phi_k B_{i,k} + C_{i,k}} \right).$$
Thus, 
\begin{align*}
\forall k \in \mathcal{C}, \frac{\partial^2 \ell_\theta}{\partial \phi_k\partial \phi_k}(\phi)&=\sum_{i=1}^{n} r_i \frac{1}{\phi_k^2}\mathbb{1}_{y_i=k} + \sum_{i=1}^n  (1-r_i)\frac{(B_{i,k})^2}{(-\phi_k B_{i,k} + C_{i,k})^2} \\
\forall \ell \in \mathcal{C}, \ell\neq k, \frac{\partial^2 \ell_\theta}{\partial \phi_\ell\partial \phi_k}(\phi)&= \sum_{i=1}^n  (1-r_i)\frac{B_{i,k}B_{i,\ell}}{(-\phi_k B_{i,k} + C_{i,k})^2}
\end{align*}
In the following, let us denote $A_i=-\phi_kB_{i,k}+C_{i,k}=\sum_{\tilde{y}\in \mathcal{C}} p(\tilde{y}|x_i;\theta)(1-\phi_{\tilde{y}})$
We have now to prove that the Hessian is positive-definite . As it is a symmetric matrix, we can show that $\forall v \in \mathbb{R}^k, v\neq\vec{0}, v^THv > 0$.

$$v^THv=\sum_{k=1}^K \frac{v_k^2}{\phi_k^2}\sum_{i=1}^nr_i\mathbb{1}_{y_i=k} + \underbrace{\sum_{k=1}^K v_k^2 \sum_{i=1}^n (1-r_i) \frac{(B_{i,k})^2}{A_i^2}+2\sum_{1\leq k<\ell \leq K} v_kv_\ell \sum_{i=1}^n (1-r_i) \frac{B_{i,k}B_{i,\ell}}{A_i^2}}_{\textrm{=T}} > 0$$

The first term is trivially greater or equal to 0. Moreover, it is never equal to $0$, if at least one sample is observed ($n_\ell>0$). For the last two terms, note that: 
$$T=\sum_{i=1}^n\left(\sum_{k=1}^K v_kB_{i,k}\sqrt{\frac{(1-r_i)}{A_i^2}}\right)^2\geq 0$$

\end{proof}

\begin{remark}[Domain of definition of $\ell_\theta(\phi;x,y\odot r,r)$]\label{rem:min}
We look at the natural domain of the negative observed log-likelihood $\phi\mapsto \ell(\theta,\phi;x,y\odot r,r)$, for a fixed $\theta \in \Theta$. The goal is to know if we can minimize the function without constraint on $\phi$ (if its domain of definition is included in $[0,1]$) 

In \eqref{eq:loglikeli_app}, the first term implies $\forall y_i, p(y_i|x_i;\theta)\phi_{y_i}>0$ i.e. $\phi_{k}>0, \:  \forall k\in \{1,\dots,K\}$. The second term requires $\forall i \in \{n_\ell+1;n\}$,
\begin{align*}
&\sum_{\tilde{y} \in \mathcal{C}} p(\tilde{y}|x_i;\theta)(1-\phi_{\tilde{y}})>0 \\
\Leftrightarrow& \forall k \in \{1,\dots,K\}, (1-\phi_k)p(\tilde{y}=k|x_i;\theta)+\sum_{\tilde{y}\in \mathcal{C}\setminus\{k\}} p(\tilde{y}|x_i;\theta)(1-\phi_{\tilde{y}}) > 0 \\
\Leftrightarrow& \forall k \in \{1,\dots,K\}, \phi_k< 1 + \frac{1}{p(\tilde{y}=k|x_i;\theta)}\sum_{\tilde{y}\in \mathcal{C}\setminus\{k\}} p(\tilde{y}|x_i;\theta)(1-\phi_{\tilde{y}})
\end{align*}
As $\frac{1}{p(\tilde{y}=k|x_i;\theta)}\sum_{\tilde{y}\in \mathcal{C}\setminus\{k\}} p(\tilde{y}|x_i;\theta)(1-\phi_{\tilde{y}})>0$, this last inequality does not necessarily implies $\phi_k\leq1$. Therefore, a reparametrization trick or constrained optimization is essential. The domain of definition for $\phi_k, k \in \{1,\dots,K\}$ of the negative log-likelihood $\ell$ is then $$D_{{\phi_k}}=\left]0;1+\min_{i\in \{1,\dots,n\}} \frac{1}{p(\tilde{y}=k|x_i;\theta)}\sum_{\tilde{y}\in \mathcal{C}\setminus\{k\}} p(\tilde{y}|x_i;\theta)(1-\phi_{\tilde{y}})\right[.$$ 
\end{remark}

\subsubsection{Consistency and asymptotic normality}

The consistency and asymptotic normality of the maximum likelihood estimator is obtained by applying Theorem 5.7 and Theorem 5.23 of \citep{van2000asymptotic}.  
Let us assume the following:
\begin{enumerate}[label=\textbf{A\arabic*.}]
    \setcounter{enumi}{2}
    \item  \label{ass:ll1} We can interchange differentiation with respect to $\phi$ and integration over $(x,y,r)$. 
    \item \label{ass:ll2} $\forall k \in \mathcal{C}$, there exists a compact interval $U_k$ such that $\phi_k\in U_k\subset \: ]0,1[$. 
\end{enumerate}

To get the results, we need first to show the identifiability of the statistical model $\mathcal{P}_\phi=\{p(r|y\odot r;\phi):\phi \in \Phi \}$ and the nonsingularity of the Fisher information $I_{\phi^\star}$, with $\phi^\star$ the oracle point (i.e. $\phi^\star=\mathrm{argmin}_{\phi \in \Phi} \ell_\theta(\phi)$). 

\begin{lemma}[Identifiability of $\mathcal{P}_\phi$]\label{prop:id}  The model $\mathcal{P}_\phi$ is identifiable. 
\end{lemma}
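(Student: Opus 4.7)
The plan is to show that $\phi$ is uniquely pinned down by the observed-data distribution, by exhibiting it as an explicit ratio of observable probabilities. For a fixed $\theta\in\Theta$, under the self-masked MNAR assumption (Assumption \ref{ass:ourSSL2}, i.e.\ $r\independent x\mid y$), the joint law of the observed triple $(x,y\odot r,r)$ factorizes as
\[
p(x,\,y\odot r=k,\,r=1;\theta,\phi)=p(x)\,p(y=k\mid x;\theta)\,\phi_k,\qquad k\in\mathcal{C},
\]
with a complementary expression $p(x)\sum_{\tilde{y}\in\mathcal{C}}p(\tilde{y}\mid x;\theta)(1-\phi_{\tilde{y}})$ on the event $\{r=0,\,y\odot r=\mathrm{NA}\}$. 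So $\phi_k$ enters the labeled stratum as a pure multiplicative factor.

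Consequently, if two parameters $\phi,\phi'\in\Phi$ generate the same observed-data distribution, then for every $x$ and every class $k\in\mathcal{C}$,
\[
p(x)\,p(y=k\mid x;\theta)\,(\phi_k-\phi'_k)=0.
\]
Integrating over $x$ yields $p(y=k)(\phi_k-\phi'_k)=0$, and since every class has positive marginal probability $p(y=k)>0$ (the same non-degeneracy used to make $\phi_k$ well-defined in \eqref{eq:meca_mm_oracle} and to guarantee invertibility in the proof of Proposition \ref{prop:estim_m}), we conclude $\phi_k=\phi'_k$ for each $k$, i.e.\ $\phi=\phi'$.

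I do not expect any serious obstacle: the only hypothesis to check is the positivity of each class probability, which is a standard, very mild non-degeneracy condition. The $r=0$ stratum plays no role in the argument — it only produces redundant constraints on $\phi$ — which reflects the structural fact that in the discrete-label self-masked setting, identifiability is driven entirely by the labeled stratum, and mirrors the inversion $\phi_k=p(r=1,y=k)/p(y=k)$ already appearing in the moment estimator.
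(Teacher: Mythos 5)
Your proof is correct and follows essentially the same route as the paper's: both arguments observe that on the labeled stratum $r=1$ the parameter $\phi_k$ appears as a direct multiplicative factor, so equality of the observed-data distributions immediately forces $\phi_k=\phi'_k$, with the $r=0$ stratum contributing nothing. The only cosmetic difference is that you work with the joint law of $(x,y\odot r,r)$ and integrate over $x$ (hence the explicit, mild requirement $p(y=k)>0$), whereas the paper reads the conclusion off the conditional $p(r\mid y\odot r;\phi)$ directly.
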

\begin{proof}
Let us consider $\phi,\phi'\in \Phi$ such that $p(r|y\odot r;\phi) =p(r|y\odot r;\phi') $. 
\begin{align*}
&p(r|y\odot r;\phi) =p(r|y\odot r;\phi'), \forall y,r \\
&\Leftrightarrow 
r\phi_y + \sum_{\tilde{y}}(1-r)(1-\phi_{\tilde{y}}) = r\phi'_y + \sum_{\tilde{y}}(1-r)(1-\phi'_{\tilde{y}}), \: \forall y,r \\
&\Leftrightarrow 
r(\phi_y-\phi'_y) +  \sum_{\tilde{y}}(1-r)(\phi'_y-\phi_{\tilde{y}}) = 0, \:  \forall y,r
\end{align*}
The case $r=1$ leads to $\phi_y=\phi'_y, \forall y$. 
\end{proof}

\begin{lemma}[Nonsingularity of the Fisher information $I_{\phi^\star}$]\label{prop:Fisher} Under Assumption \ref{ass:ll1}, the Fisher information at the oracle point $\phi^\star$ is nonsingular. 
\end{lemma}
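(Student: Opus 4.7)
The plan is to leverage the per-observation Hessian already computed in the convexity proof (Proposition~B.2). Assumption~\ref{ass:ll1} permits exchanging differentiation and integration, so the Fisher information $I_{\phi^\star}$ equals the expectation of the per-observation Hessian of the negative log-likelihood. Nonsingularity then reduces to showing that $v^\top I_{\phi^\star} v > 0$ for every nonzero $v \in \mathbb{R}^K$.

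I would next reuse the explicit decomposition derived in the proof of Proposition~B.2 applied to a single datum $(x_i,y_i,r_i)$:
$$v^\top H_i(\phi^\star) v \;=\; \sum_{k=1}^K \frac{v_k^2\, r_i\, \mathbb{1}_{\{y_i=k\}}}{(\phi^\star_k)^2} \;+\; \frac{1-r_i}{A_i^2}\Bigl(\sum_{k=1}^K v_k B_{i,k}\Bigr)^2,$$
with $B_{i,k}=p(y=k|x_i;\theta)$ and $A_i=\sum_{\tilde y}(1-\phi^\star_{\tilde y})p(\tilde y|x_i;\theta)$. Both summands are nonnegative, so it suffices to lower bound the expectation of the first one.

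Taking expectations and invoking the self-masked assumption~\ref{ass:ourSSL2}, one has $\mathbb{E}[r\mid y=k]=\phi^\star_k$, hence $\mathbb{P}(r=1,y=k)=\phi^\star_k\, p(y=k)$, and the first term contributes $\sum_{k=1}^K v_k^2\, p(y=k)/\phi^\star_k$. Since Assumption~\ref{ass:ll2} forces $\phi^\star_k>0$ for every $k$, and the identification argument of Appendix~\ref{sec:proof_id} already requires $p(y=k)>0$ for all $k$, this sum vanishes only when $v=0$. Therefore $v^\top I_{\phi^\star} v > 0$ for every nonzero $v$, which is the claimed nonsingularity.

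The only delicate step is the regularity interchange: without~\ref{ass:ll1} one cannot equate the Fisher information with the expected Hessian. Everything else is bookkeeping on the explicit Hessian from Proposition~B.2, and the sum-of-squares factorization of the off-diagonal contribution is what makes the positivity transparent rather than requiring a direct argument on the score vector.
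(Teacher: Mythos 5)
Your proof is correct and takes essentially the same route as the paper: the paper likewise identifies $I_{\phi^\star}$ with the expected per-observation Hessian via Assumption~\ref{ass:ll1} and then invokes the strict-convexity computation of Proposition~\ref{prop:conv}. You spell out the step the paper leaves implicit --- that the expectation of the labeled-data term equals $\sum_{k} v_k^2\, p(y=k)/\phi^\star_k$, which is strictly positive for $v \neq 0$ whenever $p(y=k)>0$ and $\phi^\star_k>0$ --- so your version is, if anything, the more complete one.
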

\begin{proof}
Assumption \ref{ass:ll1} implies that $\forall \ell,k \in \mathcal{C}, (I_{\phi^\star})_{(\ell,k)}=-\mathbb{E}_{x,y,r}\left[\frac{\partial^2\log\ell_\theta}{\partial\phi_\ell\partial\phi_k}(\phi^\star)\right]=-\frac{1}{n}\mathbb{E}_{x,y,r}[(H(\phi))_{(\ell,k)}]$. We can simply use the strict convexity of the function $\phi \mapsto \ell_\theta(\phi)$ proved in Proposition \ref{prop:conv}.
\end{proof}

\begin{proposition}[Consistency of $\hat{\phi}^L$]\label{prop:consistency_ll}
    Under Assumption \ref{ass:ll2}, the maximum likelihood estimator $\hat{\phi}^L$ defined in \eqref{eq:estim_ll} is consistent, for a fixed $\theta \in \Theta$. 
\end{proposition}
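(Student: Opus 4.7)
The plan is to apply Theorem 5.7 of \citep{van2000asymptotic}, whose hypotheses fit this setting quite directly. The theorem demands three things: (i) the sample criterion converges uniformly in $\phi$ to a deterministic population criterion, (ii) the population criterion has a \emph{well-separated} minimizer at the oracle point $\phi^\star$, and (iii) the estimator $\hat{\phi}^L$ approximately minimizes the sample criterion, which is automatic by definition in \eqref{eq:estim_ll}. The main work is therefore in verifying (i) and (ii).

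For (ii), I would first observe that by Assumption \ref{ass:ll2} the parameter lies in the compact set $U = U_1 \times \cdots \times U_K \subset (0,1)^K$, on which the integrand defining the population criterion $M(\phi) := \mathbb{E}[\ell_\theta(\phi)/n]$ is continuous in $\phi$ and bounded. By the standard Kullback--Leibler argument, $M(\phi)$ is minimized exactly where $p(r \mid y\odot r; \phi) = p(r \mid y\odot r;\phi^\star)$ almost surely, and by the identifiability statement in Lemma \ref{prop:id} this forces $\phi = \phi^\star$. Continuity of $M$ on the compact set $U$ then upgrades uniqueness to well-separatedness: for every $\varepsilon > 0$,
\begin{equation*}
\inf_{\phi \in U,\, \|\phi - \phi^\star\| \geq \varepsilon} M(\phi) \;>\; M(\phi^\star).
\end{equation*}

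For (i), I would invoke the uniform law of large numbers on the class $\mathcal{F} = \{f_\phi : \phi \in U\}$, where $f_\phi(x,y,r)$ is the per-sample contribution to $\ell_\theta(\phi)/n$. The ingredients are: $U$ compact, $\phi \mapsto f_\phi$ continuous for every $(x,y,r)$, and an integrable envelope. The envelope is the real technical step: on $U$ the quantities $\phi_k$ and $1-\phi_k$ are uniformly bounded away from $0$ (since the $U_k$ are compact subsets of the open interval $(0,1)$), hence $|\log\phi_{y}|$ and $|\log \sum_{\tilde y} p(\tilde y\mid x;\theta)(1-\phi_{\tilde y})|$ are both uniformly bounded on $U$. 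This makes $\mathcal{F}$ a Glivenko--Cantelli class by the bracketing/continuity argument (e.g. Example 19.8 in \citep{van2000asymptotic}), and therefore $\sup_{\phi \in U}|M_n(\phi) - M(\phi)| \xrightarrow{\mathbb{P}} 0$, where $M_n(\phi) = \ell_\theta(\phi)/n$.

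The hard part is really the well-separatedness, because identifiability was only proved pointwise in Lemma \ref{prop:id}; translating it into a strict separation in terms of the criterion $M$ requires that the population-level KL divergence between $p(\cdot\mid y\odot r;\phi)$ and $p(\cdot\mid y\odot r;\phi^\star)$ be positive whenever $\phi\ne\phi^\star$, which here uses identifiability together with the fact that the self-masked mechanism has full support for $r \in \{0,1\}$. Once (i) and (ii) are in hand, Theorem 5.7 of \citep{van2000asymptotic} yields $\hat{\phi}^L \xrightarrow{\mathbb{P}} \phi^\star$, which is the desired consistency.
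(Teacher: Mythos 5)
Your proposal is correct and follows essentially the same route as the paper: both apply Theorem 5.7 of van der Vaart (2000), establish uniform convergence of the criterion via the uniform boundedness of the log-likelihood terms afforded by Assumption \ref{ass:ll2} (the paper cites a ULLN from Wainwright where you cite a Glivenko--Cantelli example, but the substance is identical), and derive the well-separated minimum from the identifiability Lemma \ref{prop:id} combined with compactness of $\otimes_k U_k$. Your treatment of the passage from pointwise identifiability to well-separatedness is, if anything, slightly more explicit than the paper's.
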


\begin{proof}
As said in Section 5.5 of \citet{van2000asymptotic} for the application to the maximum likelihood estimators, the method consists of applying Theorem 5.7 of \citet{van2000asymptotic} by noting that the MLE is a M-estimator: $\hat{\phi}^L \in \mathrm{argmin}_{\phi\in \Phi} M_n(\phi)$, with $M_n(\phi)=\frac{1}{n}\sum_{i=1}^n\log\frac{p(x_i,y_i\odot r_i,r_i;\phi^\star)}{p(x_i,y_i\odot r_i,r_i;\phi)}$ and $M(\phi)=\mathbb{E}\left[\log\frac{p(x,y\odot r,r;\phi^\star)}{p(x,y\odot r,r;\phi)}\right]$.
\begin{itemize}
\item We have the identifiability of $\mathcal{P}_\phi$ by Lemma \ref{prop:id}. We also have the strong identifiability, which is equivalent to the identifiability, since the restricted set of $\Phi$, $\otimes_{k\in \mathcal{C}} U_k$, is compact (see Assumption \ref{ass:ll2}).
\item We have to show that the uniform weak law of large numbers hold for the function $\phi\mapsto \log\frac{p(x_i,y_i\odot r_i,r_i;\phi^\star)}{p(x_i,y_i\odot r_i,r_i;\phi)}$, i.e.\ $$\mathrm{sup}_{\phi \in \Phi} |M_n(\phi)-M(\phi)|\underset{n \to +\infty}{\overset{\mathbb{P}}{\longrightarrow}} 0.$$
Following Theorem 4.2 of \citet{wainwright2019high}, a sufficient condition is that there exists $M>0$ sucht that, $\left|\log\frac{p(x,y\odot r,r;\phi^\star)}{p(x,y\odot r,r;\phi)}\right|<M, \: \forall \phi,x,y,r,$ i.e. the uniform boundedness. We will prove that there exists $M_1,M_2>0$ such that
$M_1\leq \frac{p(x,y\odot r,r;\phi^\star)}{p(x,y\odot r,r;\phi)} \leq M_2$.

We have
$$\frac{p(x,y\odot r,r;\phi^\star)}{p(x,y\odot r,r;\phi)}=\frac{r\phi^\star_y+(1-r)\sum_{\tilde{y}}(1-\phi^\star_{\tilde{y}})}{r\phi_y+(1-r)\sum_{\tilde{y}}(1-\phi_{\tilde{y}})}$$

Using Assumption \ref{ass:ll2}, we get $M_1=\min\left(\frac{\min_k a_k}{\max_k b_k},\frac{1-\max_k b_k}{1-\min_k a_k}\right)$ and $M_2=\max\left(\frac{1-\min_k a_k}{1-\max_k b_k},\frac{\max_k b_k}{\min_k a_k}\right)$

\item By identifiability of the statistical model $\mathcal{P}_\phi$, we have: $M(\phi)=M(\phi^\star)$ implies $\phi=\phi^\star$. Therefore $\forall \phi \neq \phi^\star, M(\phi)>M(\phi^\star)$ and the function $\phi \mapsto M(\phi)$ admits a strict minimum in $\phi_0$.
\end{itemize}
\end{proof}

\begin{proposition}[Asymptotic normality of $\hat{\phi}^L$]
    Under Assumption \ref{ass:ll1} and \ref{ass:ll2}, the maximum likelihood estimator $\hat{\phi}^L$ defined in \eqref{eq:estim_ll} is asymptotically normal, for a fixed $\theta \in \Theta$. 
\end{proposition}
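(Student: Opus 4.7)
The plan is to apply Theorem 5.23 of \citet{van2000asymptotic} to the M-estimator $\hat{\phi}^L \in \mathrm{argmin}_\phi M_n(\phi)$ where $M_n(\phi) = \frac{1}{n}\ell_\theta(\phi)$, with population counterpart $M(\phi) = \mathbb{E}[-\log p(x, y\odot r, r;\phi)]$. The theorem requires four ingredients: (i) consistency of $\hat{\phi}^L$, (ii) a local Lipschitz condition on the criterion $\phi \mapsto \log p(x,y\odot r,r;\phi)$ with a square-integrable Lipschitz constant, (iii) a second-order Taylor expansion of $M$ at $\phi^\star$, and (iv) nonsingularity of the Hessian at $\phi^\star$. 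Ingredient (i) is given by Proposition \ref{prop:consistency_ll}, and (iv) follows from Lemma \ref{prop:Fisher} combined with Assumption \ref{ass:ll1}, which guarantees that the Hessian of $M$ at $\phi^\star$ equals the Fisher information $I_{\phi^\star}$.

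For ingredient (ii), I would exploit the explicit form of the log-likelihood contribution of a single observation, namely $r\log\phi_y + (1-r)\log\sum_{\tilde{y}} p(\tilde{y}|x;\theta)(1-\phi_{\tilde{y}})$, which is smooth in $\phi$ on the interior of $[0,1]^K$. Under Assumption \ref{ass:ll2}, $\phi$ is restricted to a compact set $U = \otimes_k U_k$ strictly inside $]0,1[^K$, so each partial derivative $\partial_{\phi_k}\log p(x,y\odot r,r;\phi)$ is uniformly bounded by some constant $M_0$ depending only on $U$ and on the fact that $\sum_{\tilde{y}} p(\tilde{y}|x;\theta)(1-\phi_{\tilde{y}})$ stays bounded away from zero. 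By the mean value theorem this yields the desired Lipschitz bound with constant $\dot{m}(x,y,r) = M_0\sqrt{K}$, which is trivially square-integrable.

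For ingredient (iii), Assumption \ref{ass:ll1} permits interchanging differentiation with respect to $\phi$ and integration over $(x,y,r)$, so $M$ is twice continuously differentiable on the interior of $\Phi$ with $\nabla^2 M(\phi^\star) = I_{\phi^\star}$. Combined with $\nabla M(\phi^\star) = 0$ (since $\phi^\star$ is the population minimizer, by identifiability from Lemma \ref{prop:id}), Taylor's theorem yields $M(\phi) - M(\phi^\star) = \tfrac{1}{2}(\phi - \phi^\star)^\top I_{\phi^\star}(\phi - \phi^\star) + o(\|\phi - \phi^\star\|^2)$, exactly as required.

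With all four hypotheses in hand, Theorem 5.23 of \citet{van2000asymptotic} yields
\begin{equation*}
\sqrt{n}(\hat{\phi}^L - \phi^\star) \xrightarrow{d} \mathcal{N}\bigl(0,\, I_{\phi^\star}^{-1}\bigr),
\end{equation*}
which is the asymptotic normality statement. I expect the main obstacle to be a careful verification of the Lipschitz bound on the unlabeled contribution $\log\sum_{\tilde{y}} p(\tilde{y}|x;\theta)(1-\phi_{\tilde{y}})$: one must ensure the denominator stays bounded away from zero uniformly in $x$, which requires combining Assumption \ref{ass:ll2} (so that $1-\phi_{\tilde{y}}$ is bounded below) with the fact that $\sum_{\tilde{y}} p(\tilde{y}|x;\theta) = 1$, so that at least one term in the sum is bounded below by $1/K$ times the minimum of $1 - \phi_k$ over $k$. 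Once this uniform lower bound is secured, the rest of the verification is routine.
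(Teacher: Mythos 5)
Your proposal is correct and takes essentially the same route as the paper: the paper invokes Theorem 5.39 of \citet{van2000asymptotic} (the maximum-likelihood corollary of the Theorem 5.23 you apply directly) and verifies the same ingredients --- consistency from the earlier proposition, nonsingularity of the Fisher information from the dedicated lemma, and a uniform bound on the score obtained from the compactness in Assumption \ref{ass:ll2}, including the same lower bound on $\sum_{\tilde{y}} p(\tilde{y}|x;\theta)(1-\phi_{\tilde{y}})$ that you flag as the main obstacle. The only cosmetic difference is that you phrase the score bound as a Lipschitz condition via the mean value theorem and spell out the second-order Taylor expansion of $M$, both of which are folded into the hypotheses of Theorem 5.39 in the paper's version.
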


\begin{proof}
The proof directly follows from Theorem 5.39 of \citet{van2000asymptotic} (Corollary of Theorem 5.23 for the maximum likelihood estimators). To apply the theorem, we check the following conditions:
\begin{itemize}
\item The statistical model $\mathcal{P}_\phi=\{p(r|y\odot r;\phi):\phi \in \Phi \}$ is differentiable in quadratic mean at $\phi^\star$, because $p(r|y\odot r;\phi)=r\phi_y + \sum_{\tilde{y}}(1-r)(1-\phi_{\tilde{y}})$ is trivially twice differentiable. 
\item 
The score function $S(\phi;r,y\odot r)$ is uniformly bounded in $y,r$ and $\phi$, ranging over a compact and continuous in $\phi$ forall $y,r$, i.e. we want to show that there exists a real number $M$, 
$$\|S(\phi;r,y)\|_1=\sum_{k=1}^K \left|\frac{\partial \log p(r|y\odot r;\phi)}{\partial \phi_k}\right|\leq M, \forall \phi,y,r$$

The score function for its coordinate $k$ is: 
$S(\phi_k;r,y\odot r)=\frac{\partial \log p(r|y\odot r;\phi)}{\partial \phi_k}=\frac{r\mathbb{1}_{y=k}-(1-r)}{r\phi_y+(1-r)\sum_{\tilde{y}}(1-\phi_{\tilde{y}})}$. If $r=1$, this amounts to bound $1/\phi_k$ and if $r=0$, this amounts to bound $\frac{1}{\sum_{\tilde{y}}(1-\phi_{\tilde{y}})}\leq\frac{1}{K(1-\max_k \phi_k)}$. Using Assumption \ref{ass:ll2} is sufficient to get the bound. Let us denote $U_k=[a_k,b_k], \forall k \in \mathcal{C}$, one has $a_k\leq \phi_k \leq b_k \forall k \in \mathcal{C}$ and we can choose for the bound $M=K\max\left(\frac{1}{\min_k a_k},\frac{1}{K(1-\max_k b_k)}\right)$. 
\item By Lemma \ref{prop:Fisher}, the Fisher information $I_{\phi^\star}$ is nonsingular.
\item By Proposition \ref{prop:consistency_ll}, the estimator $\hat{\phi}^L$ is consistent.
\end{itemize}

\end{proof}

\subsection{Consistency of the SSL risk}

\begin{proposition}[Consistency of the risk]
If $\hat{\phi}$ is a consistent estimator of $\phi$ and if the mechanism is well specified, the risks $\hat{\mathcal{R}}_{\hat{\phi}}$ and $\hat{{\mathcal{R}}}^{\mathrm{SSL}}_{\hat{\phi}}(\theta)$ are consistent estimators of the theoretical risk $\mathcal{R}(\theta)=\mathbb{E}[\ell_\ell(\theta;x,y)]$.
\end{proposition}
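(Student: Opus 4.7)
The plan is to decompose each risk into an ``oracle'' part (using the true $\phi$) plus a plug-in correction, and to handle the two pieces separately: the law of large numbers together with Proposition \ref{prop:IPWunbiased} controls the oracle part, while the consistency of $\hat{\phi}$ combined with a Slutsky / continuous mapping argument kills the correction.

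\textbf{Step 1 (oracle risk).} For the IPW estimator $\hat{\mathcal{R}}_\phi(\theta) = \frac{1}{n}\sum_{i=1}^n \frac{r_i \ell_\ell(\theta;x_i,y_i)}{\phi_{y_i}}$, the summands are i.i.d. with common expectation $\mathcal{R}(\theta)$ by Proposition \ref{prop:IPWunbiased} (using that the mechanism is well specified, $\mathbb{E}[r\mid y]=\phi_y$). Assuming the loss is integrable, the weak law of large numbers gives $\hat{\mathcal{R}}_\phi(\theta) \xrightarrow{\mathbb{P}} \mathcal{R}(\theta)$. An identical argument shows that the additional term in $\hat{\mathcal{R}}^{\mathrm{SSL}}_\phi$, namely $-\frac{\lambda}{n}\sum_i \frac{r_i-\phi_{y_i}}{\phi_{y_i}}\ell_u(\theta;x_i)$, converges in probability to $-\lambda\,\mathbb{E}\!\left[\frac{r-\phi_y}{\phi_y}\ell_u(\theta;x)\right]$, which equals $0$ by conditioning on $y$ and using $\mathbb{E}[r\mid y]=\phi_y$ together with $r\independent x\mid y$. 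Hence $\hat{\mathcal{R}}^{\mathrm{SSL}}_\phi(\theta) \xrightarrow{\mathbb{P}} \mathcal{R}(\theta)$ as well.

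\textbf{Step 2 (plug-in correction).} Write
\[
\hat{\mathcal{R}}_{\hat{\phi}}(\theta) - \hat{\mathcal{R}}_\phi(\theta)
= \frac{1}{n}\sum_{i=1}^n r_i\,\ell_\ell(\theta;x_i,y_i)\left(\frac{1}{\hat{\phi}_{y_i}} - \frac{1}{\phi_{y_i}}\right).
\]
Since $\phi_y\in(0,1]$ is bounded away from $0$ (this is already implicit in the assumptions made to prove Propositions \ref{prop:estim_m}--\ref{prop:estim_ll}, see Assumption \ref{ass:ll2}) and $\hat{\phi}\xrightarrow{\mathbb{P}}\phi$, the map $\phi\mapsto 1/\phi_y$ is continuous, so by the continuous mapping theorem $\max_{k\in\mathcal{C}}|1/\hat{\phi}_k - 1/\phi_k|\xrightarrow{\mathbb{P}} 0$. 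The absolute value of the correction is then bounded by
\[
\max_{k\in\mathcal{C}}\left|\tfrac{1}{\hat{\phi}_k} - \tfrac{1}{\phi_k}\right|\cdot \tfrac{1}{n}\sum_{i=1}^n r_i\,|\ell_\ell(\theta;x_i,y_i)|,
\]
the second factor being $O_{\mathbb{P}}(1)$ by the law of large numbers. Slutsky's lemma concludes that the correction tends to $0$ in probability, so $\hat{\mathcal{R}}_{\hat{\phi}}(\theta)\xrightarrow{\mathbb{P}}\mathcal{R}(\theta)$.

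\textbf{Step 3 (SSL risk).} The same decomposition applied to $\hat{\mathcal{R}}^{\mathrm{SSL}}_{\hat{\phi}}$ gives two correction terms, one associated with $\ell_\ell$ and one with $\ell_u$, each of the form $\max_k|1/\hat{\phi}_k - 1/\phi_k|$ times an empirical average that is $O_{\mathbb{P}}(1)$. Combined with Step~1, this yields $\hat{\mathcal{R}}^{\mathrm{SSL}}_{\hat{\phi}}(\theta)\xrightarrow{\mathbb{P}}\mathcal{R}(\theta)$.

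\textbf{Main obstacle.} The delicate step is controlling the plug-in error uniformly in the sample, which requires that $\phi_y$ (and hence $\hat{\phi}_y$, eventually) stays bounded away from $0$ and that the loss has a finite first moment. These are mild conditions and coherent with Assumption \ref{ass:ll2}, but they should be stated explicitly; without them, the ratio $1/\hat{\phi}_y$ could blow up on a non-negligible event and the Slutsky argument would fail.
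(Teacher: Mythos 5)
Your proof is correct and follows essentially the same route as the paper's: the paper likewise writes $\hat{\mathcal{R}}_{\hat{\phi}}(\theta)=\hat{\mathcal{R}}_{\phi}(\theta)+o_{\mathbb{P}}(1)$ and invokes the law of large numbers together with Proposition \ref{prop:IPWunbiased}. The only difference is that you make explicit the justification of the $o_{\mathbb{P}}(1)$ term (via $\phi_y$ bounded away from $0$, integrability of the loss, and continuous mapping), which the paper simply asserts.
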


\begin{proof}
We prove the results for the IPW estimator $\hat{\mathcal{R}}_{\hat{\phi}}$ (the proof is similar for $\hat{{\mathcal{R}}}^{\mathrm{SSL}}_{\hat{\phi}}(\theta)$). 

It is a simple application of the law of large numbers, using the unbiasedness of the estimator (by Proposition \ref{prop:IPWunbiased}).

We have:
\begin{align*}
\frac{1}{n}\sum_{i=1}^n r_i \frac{\ell_\ell(\theta;x_i,y_i)}{\hat{\phi}_{y_i}}=\frac{1}{n}\sum_{i=1}^n r_i \frac{\ell_\ell(\theta;x_i,y_i)}{{\phi}_{y_i}} + o_\mathbb{P}(1) \underset{n \to +\infty}{\overset{\mathbb{P}}{\longrightarrow}} \mathbb{E}\left[r\frac{\ell_\ell(\theta;x,y)}{\phi_y}\right]=\mathbb{E}\left[{\ell_\ell(\theta;x,y)}\right]
\end{align*}
\end{proof}

\section{Additional numerical experiments}
\label{sec:add_xp}

\begin{figure}[H]    
   \centering \includegraphics[width=0.35\textwidth]{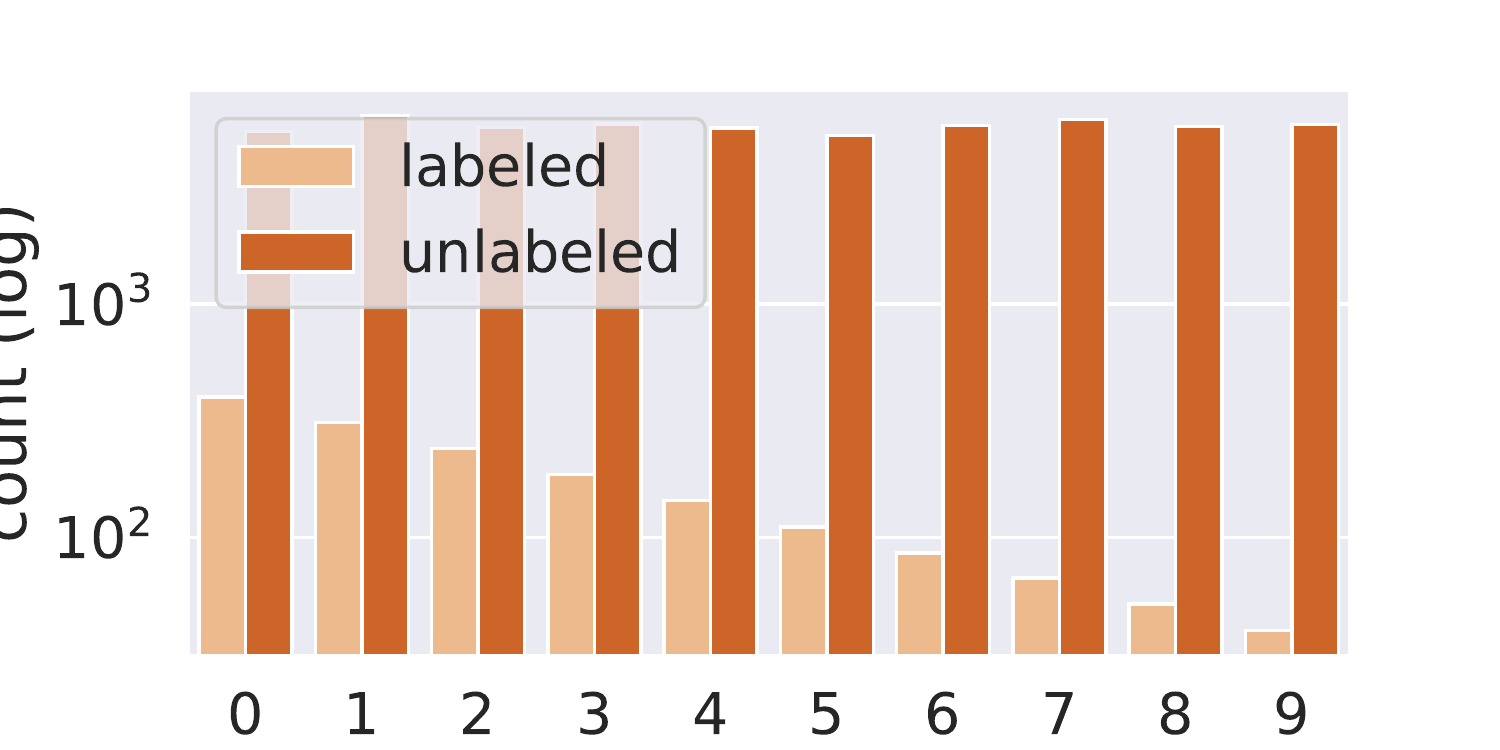}
         \includegraphics[width=0.35\textwidth]{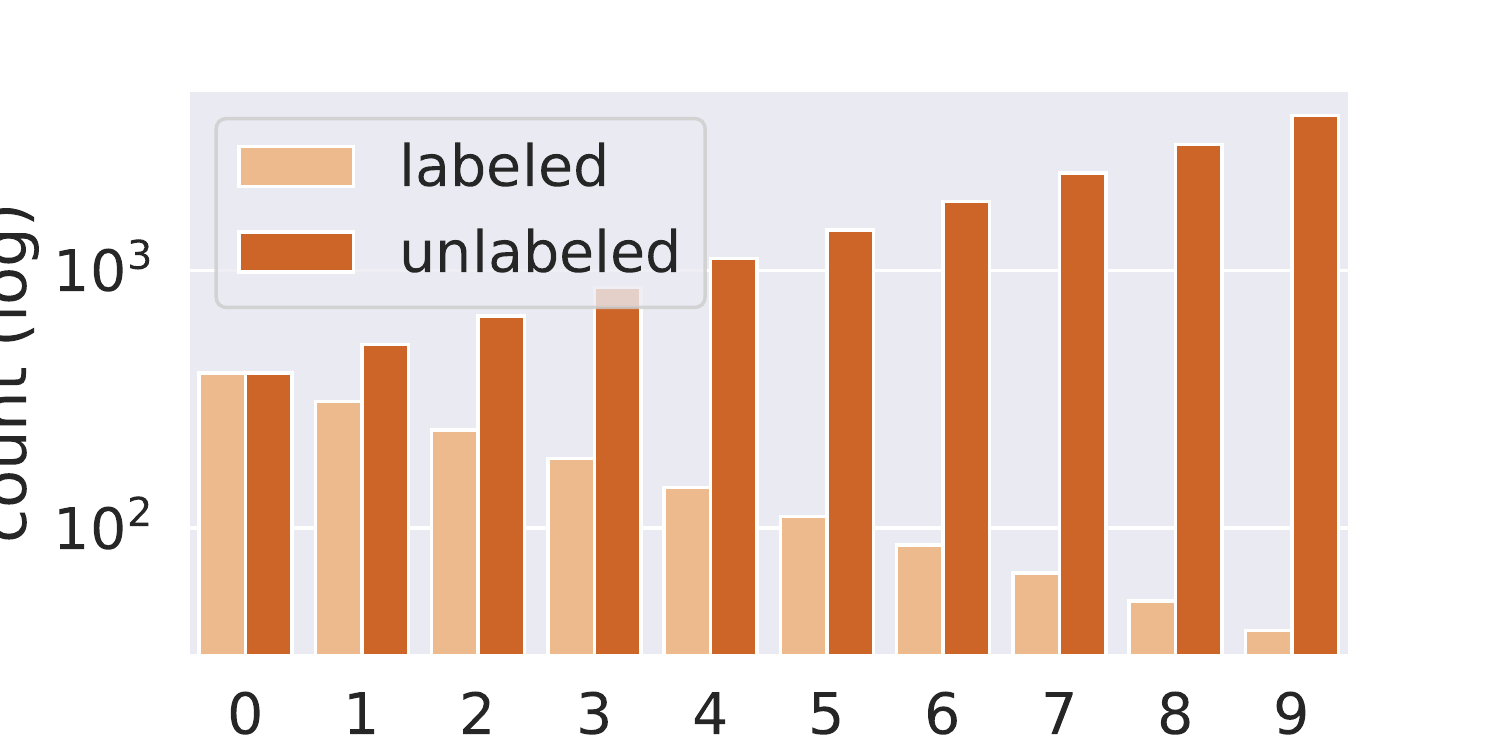}
         \caption{MNAR labels on MNIST (Settings \ref{setting1} and \ref{setting2})}
         \label{fig:unbalanced_MNIST}
\end{figure}

\begin{table}[H]
\caption{Test accuracy and test loss on MNIST, Setting \ref{setting1}. \label{table:MNIST balanced}}
\vskip 0.15in
\begin{center}
\begin{small}
\begin{sc}
\begin{tabular}{cccc}
\toprule
  \textbf{Method}   &\textbf{Loss} & \textbf{Accuracy} & \textbf{MSE $\phi$} \\
\midrule
Pl & $0.259  \pm 0.034$ & $95.48 \pm 0.16$ & 0.318 \\
DePl      & $0.237 \pm 0.045$ & $95.69 \pm 0.06$ & 0.318\\
\bottomrule
CADR & $0.272 \pm 0.046$ & $95.40 \pm 0.33$ & $0.014 \pm 0.004$ \\
MLE (Ours) & $0.249  \pm 0.050$ & $95.59  \pm 0.40$ & $0.031  \pm 0.009$\\
MEg (Ours) &  $0.240 \pm 0.029$ & $95.69 \pm 0.80$ & $0.004 \pm 0.001$\\
ME (Ours) & $0.240 \pm 0.027$& $95.34 \pm 0.26$& $0.013 \pm 0.001$\\
\midrule
\end{tabular}
\end{sc}
\end{small}
\end{center}
\vskip -0.1in
\end{table}

 \begin{figure}
 \centering
\includegraphics[width=0.6\textwidth]{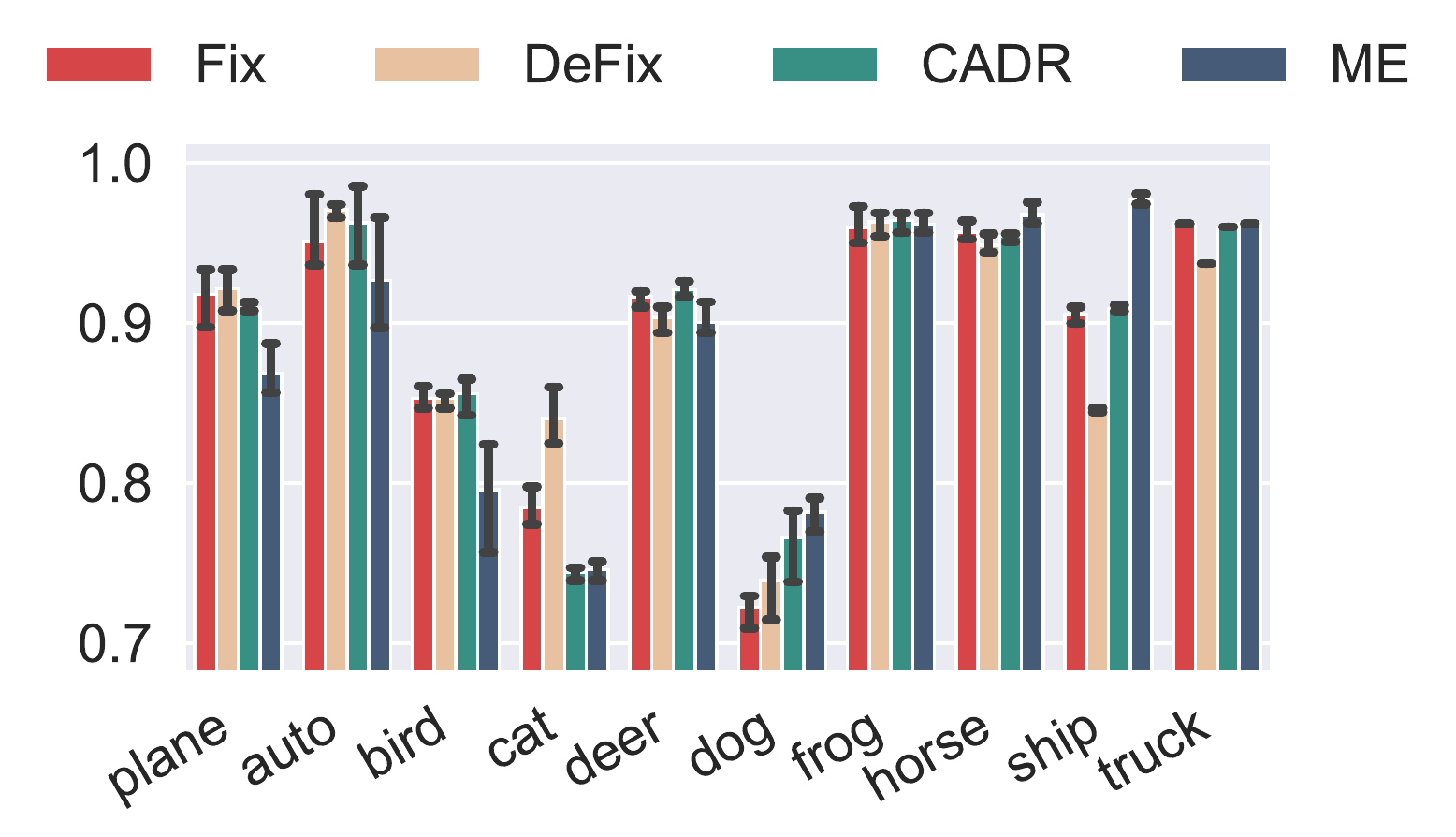}
 \caption{Accuracy per class on CIFAR10, Setting \ref{setting2}.}
 \label{fig:acc_unbalanced_CIFAR10}
\end{figure}

 \begin{figure}[H]
 \centering
 \includegraphics[width=0.6\textwidth]{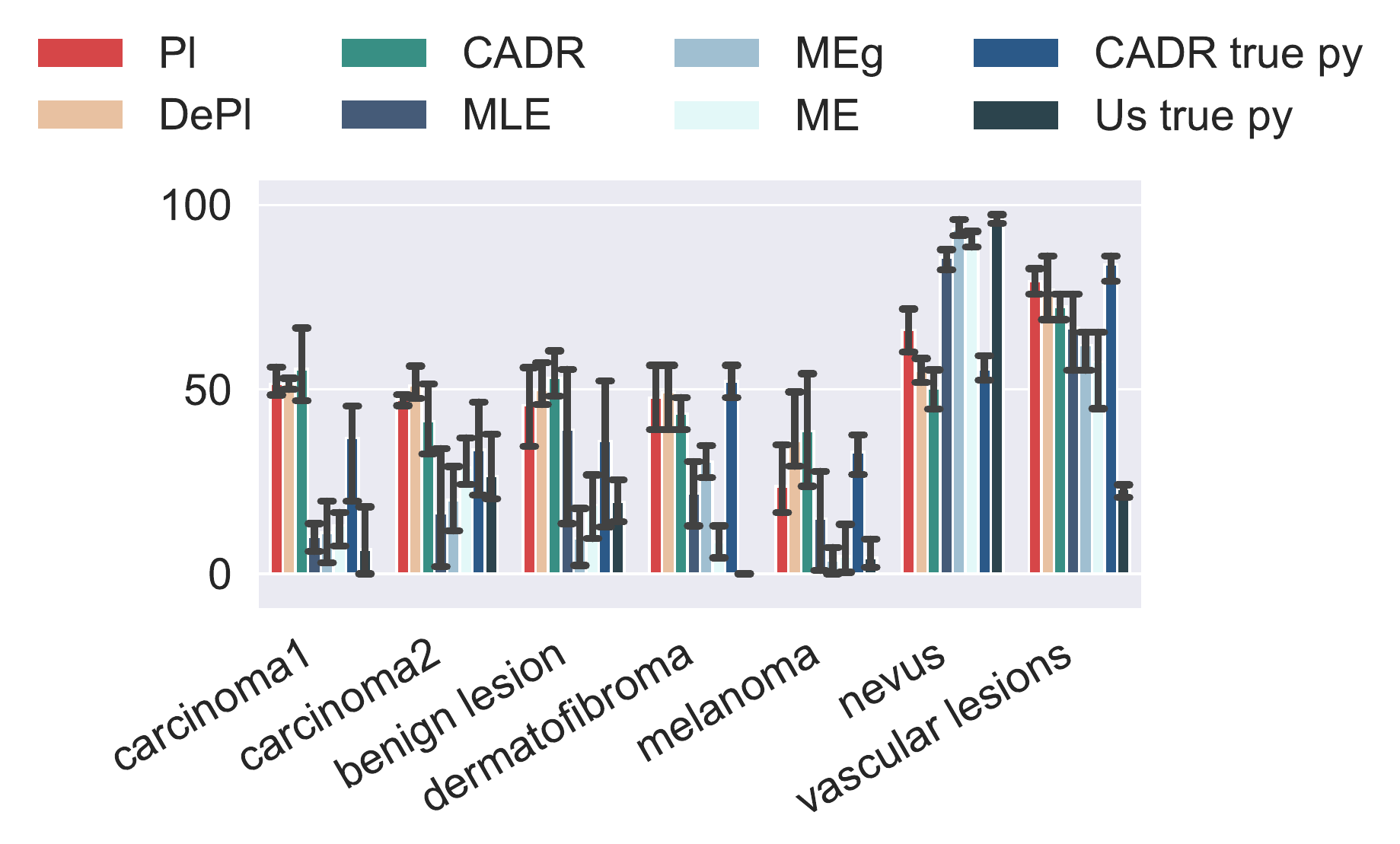}

 \caption{Accuracy per class on dermaMNIST.}
 \label{fig:dermaMNIST}
\end{figure}

\begin{table}[h]
\caption{Accuracy and Loss in the test set on dermaMNIST.} 
\label{table:dermaMNIST_appendix}
\vskip 0.15in
\begin{center}
\begin{small}
\begin{sc}
\begin{tabular}{ccccc}
\toprule
\textbf{Method}   &\textbf{Loss} & \textbf{Accuracy} & \textbf{Accuracy Naevi}&\textbf{MSE \textbf{$\phi_{\textrm{nevi}}$}}   \\
\midrule
Pl & $1.34 \pm 0.16$ & $57.72 \pm 1.95$  & $66.14\pm 5.86$ &  0.80\\ 
\midrule
CADR & $1.42 \pm 0.060$ & $49.36 \pm 1.91$ & $50.41 \pm 5.38$ &$0.77 \pm 0.02$\\
MLE (Ours) & $0.993 \pm 0.020$ & $66.4 \pm 0.81$& $91.16\pm2.26$ &$0.34 \pm 0.03$ \\
MEg (Ours) & $1.19 \pm 0.148$ & $66.65 \pm 1.76$ &  $93.54\pm2.30$ & $0.42 \pm 0.08$\\
ME (Ours) &  $1.24 \pm 0.087$ & $65.8 \pm 0.78$ &  $85.91\pm 3.05$ &$0.38 \pm 0.15$\\
\midrule
CADR ($p(y)$ known) & $1.57 \pm 0.12$ & $49.44 \pm 3.27$ & $55.40\pm3.39$ &\\
Algorithm \ref{alg:second} ($p(y)$ known) & $0.943 \pm 0.029$ & $68.83 \pm 0.26$ & $96.12 \pm 1.20$& \\
\bottomrule
\end{tabular}
\end{sc}
\end{small}
\end{center}
\vskip -0.1in
\end{table}


\begin{figure}[H]    
   \centering \includegraphics[width=0.4\textwidth]{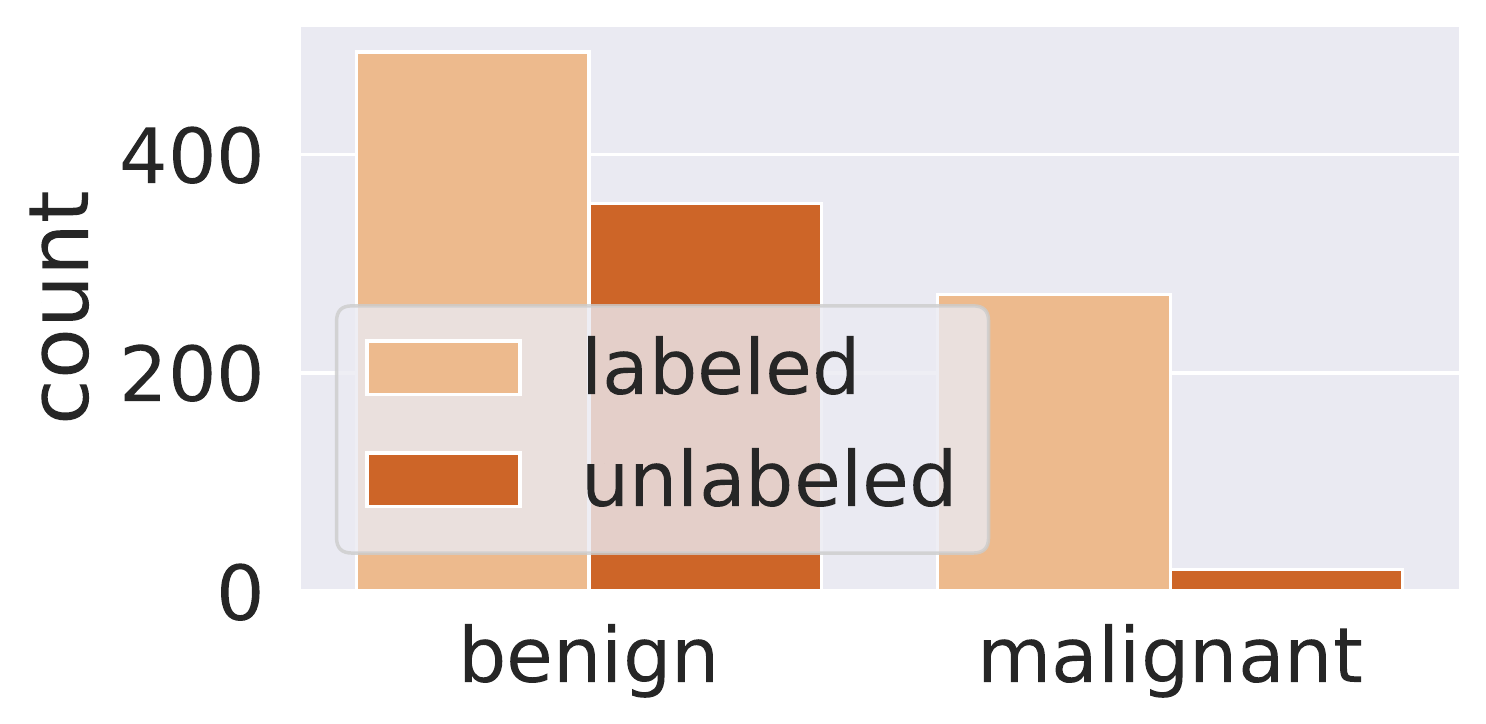}
    \caption{MNAR labels on noduleMNIST (see Section \ref{sec:medMNIST} for an explanation of the missing-data scenario)}
    \label{fig:noduleMNIST}
\end{figure}

\begin{table}
\caption{Accuracy and Loss in the test set on noduleMNIST3D}
\label{tab:noduleMNIST}
\vskip 0.1in
\begin{center}
\begin{small}
\begin{sc}
\begin{tabular}{ccccc}
\toprule
\textbf{Method} & \textbf{Loss}   & \textbf{Specificity (begnin)} & \textbf{Specificy (malign)} & MSE $\phi$   \\
\midrule
PL & 0.389 & 91.06 & 54.69 & 0.0627
\\
\midrule
CADR & 0.48 & 80.08 & 81.25 & 0.0143
\\
MLE (Ours) & 0.359 & 87.80 & 71.88 & 0.0001 \\
MEg (Ours) & 0.353 & 83.74 & 76.56 & 0.0199\\
ME (Ours) & 0.355 & 86.99 & 78.13 & 0.0002 \\
\bottomrule
\end{tabular}
\end{sc}
\end{small}
\end{center}
\vskip -0.1in
\end{table}

\end{document}